\renewcommand{\vec}[1]{\boldsymbol{#1}}
\newcommand{\E}{\mathbb{E}}
\newcommand{\V}{\mathbb{V}}
\newcommand{\C}{\mathcal{C}}
\newcommand{\Hyp}{\mathcal{H}}
\newcommand{\F}{\mathcal{F}}
\newcommand{\R}{\mathbb{R}}
\newcommand{\Prob}{\mathbb{P}}
\newcommand{\Lu}{L_u}
\newcommand{\LN}{L_N}
\newcommand{\Lm}{\hat{L}_m}
\newcommand{\hm}{\hat{h}_m}
\newcommand{\Ex}{\mathcal{E}}
\newtheorem{assumptions}[theorem]{Assumptions}
\title[Localized Complexities for Transductive Learning]{Localized Complexities for Transductive Learning}
\thanks{Most of the work was done while MK was with the Courant Institute of Mathematical Sciences and Memorial Sloan-Kettering Cancer Center, New York, NY, USA.} \Email{mkloft@cs.nyu.edu}\\
\begin{document}
\maketitle

\begin{abstract}
We show two novel concentration inequalities for suprema of empirical processes when sampling \emph{without} replacement, 
which both take the variance of the functions into account. While these inequalities may potentially have broad applications in learning theory in general,
we exemplify their significance by studying the \emph{transductive} setting of learning theory. For which we provide the first excess risk bounds
based on the localized complexity of the hypothesis class, which can yield fast rates of convergence
also in the transductive learning setting. We give a preliminary analysis 
of the localized complexities for the prominent case of kernel classes.
\end{abstract}


\smallskip 

\begin{keywords}
Statistical Learning, Transductive Learning, Fast Rates, Localized Complexities, Concentration Inequalities, Empirical Processes, Kernel Classes
\end{keywords}

\section{Introduction}

The analysis of the stochastic behavior of empirical processes is a key ingredient in learning theory.
The supremum of empirical processes is of particular interest, playing a central role in various application areas, 
including the theory of empirical processes, VC theory, and Rademacher complexity theory, to name only a few.
Powerful Bennett-type concentration inequalities on the sup-norm of empirical processes introduced in \cite{Tal96a} (see also \cite{B02}) are at 
the heart of many recent advances in statistical learning theory, including \emph{local Rademacher complexities} \citep{BBM05,K11sf} and
related localization strategies \citep[cf.][Chapter 4]{Steinwart2008},
which can yield \emph{fast rates} of convergence on the excess risk.

These inequalities are based on the assumption of \emph{independent and identically distributed} random variables, 
commonly assumed in the inductive setting of learning theory and thus implicitly underlying many prevalent machine learning algorithms
such as support vector machines \citep{CorVap95,Steinwart2008}. 
However, in many cases the i.i.d. assumption breaks and substitutes for Talagrand's inequality are required.
For instance, the i.i.d. assumption is violated when training and test data come from different distributions or 
data points exhibit (e.g., temporal) interdependencies \cite[e.g.,][]{Steinwartjournal}.
Both scenarios are typical situations in visual recognition, computational biology, and many other application areas.

Another example where the i.i.d. assumption is void---in the focus of the present paper---is the \emph{transductive} setting of learning theory, 
where training examples are sampled independent and \emph{without} replacement 
from a finite population, instead of being sampled i.i.d. with replacement.
The learner in this case is provided with both a labeled training set and an unlabeled test set, and the goal is to predict the label of the test points.
This setting naturally appears in almost all popular application areas, including text mining, computational biology, recommender systems, visual
recognition, and computer malware detection, as effectively constraints are imposed on the samples, since they are inherently realized within the global
system of our world.
As an example, consider image categorization, which is an important task in the application area of visual recognition.
An object of study here could be the set of all images disseminated in the internet, only some of which are already reliably labeled 
(e.g., by manual inspection by a human), and the goal is to predict the unknown labels of the unlabeled images, in order to, e.g., make them 
accessible to search engines for content-based image retrieval.

From a theoretical view, however, the transductive learning setting is yet not fully understood. 
Several transductive error bounds were presented in series of works \citep{Vap82, Vap98, BL03, DEM04, CM06, EP09, CMP+09}, including the first analysis
based on \emph{global Rademacher complexities} presented in \cite{EP09}.
However, the theoretical analysis of the performance of transductive learning algorithms still remains less illuminated than in the classic inductive
setting: to the best of our knowledge, existing results do not provide fast rates of convergence in the general transductive setting.\footnote{
An exception are the works of \cite{BL03,CM06}, which consider, however, the case where the Bayes hypothesis has zero error and is 
contained in the hypothesis class. This is clearly an assumption too restrictive in practice, where the Bayes hypothesis usually cannot be assumed to be
contained in the class.
}

In this paper, we consider the transductive learning setting with arbitrary bounded nonnegative loss functions.
The main result is an excess risk bound for transductive learning based on the localized complexity of the hypothesis class.
This bound holds under general assumptions on the loss function and hypothesis class and can be viewed as a transductive analogue of Corollary 5.3 in 
\cite{BBM05}. The bound is very generally applicable with loss functions such as the squared loss and common hypothesis classes.
By exemplarily applying our bound to kernel classes, we achieve, for the first time in the transductive setup, an excess risk bound in terms
of the tailsum of the eigenvalues of the kernel, similar to the best known results in the inductive setting.
In addition, we also provide new transductive generalization error bounds that take the variances of the functions into account,
and thus can yield sharper estimates.

The localized excess risk bound is achieved by proving two novel 
\emph{concentration inequalities} for suprema of empirical processes when
sampling without replacement. The application of which goes far beyond the transductive learning setting---these concentration inequalities could
serve as a fundamental mathematical tool in proving results in various other areas of machine learning and learning theory. 
For instance, arguably the most prominent example in machine learning and learning theory of an empirical process where sampling without replacement is employed
is cross-validation \citep{Stone1974}, where training and test folds are sampled without replacement from the overall pool of examples, and
the new inequalities could help gaining a non-asymptotic understanding of cross-validation procedures.
However, the investigation of further applications of the novel concentration inequalities beyond the transductive learning setting
is outside of the scope of the present paper.

\section{The Transductive Learning Setting and State of the Art}

From a statistical point of view, the main difference between the transductive and inductive learning settings lies in the protocols used to obtain the training sample $S$.
Inductive learning assumes that the training sample is drawn i.i.d. from some fixed and unknown distribution $P$ on the product space $\mathcal{X}\times \mathcal{Y}$, 
where $\mathcal{X}$ is the input space and $\mathcal{Y}$ is the output space.
The learning algorithm chooses a predictor $h$ from some fixed hypothesis set $\mathcal{H}$ based on the training sample, and 
the goal is to minimize the true risk $\E_{X\times Y}[\ell\bigl( h(X),Y\bigr)]\to\min_{h\in\mathcal{H}}$ for a fixed, bounded, and 
nonnegative loss function $\ell\colon \mathcal{Y}^2\to[0,1]$.

We will use one of the two transductive settings\footnote{
The second setting assumes that both training and test sets are sampled i.\,i.\,d.\,from the same unknown distribution and the 
learner is provided with the labeled training and unlabeled test sets. It is pointed out by \cite{Vap98} that any upper bound on 
$\Lu(h) - \Lm(h)$ in the setting we consider directly implies a bound also for the second setting.}
considered in \cite{Vap98}, which is also used in \cite{DEM04,EP09}.  
Assume that a set $\vec X_N$ consisting of $N$ arbitrary input points is given (without any assumptions regarding its underlying source).
We then sample $m\leq N$ objects $\vec X_m\subseteq \vec X_N$ uniformly without replacement from $\vec X_N$ (which makes the inputs in $\vec X_m$ dependent).
Finally, for the input examples $\vec X_m$ we obtain their outputs $\vec Y_m$ by sampling, for each input $X\in\vec X_m$, the corresponding output $Y$ 
from some unknown distribution $P(Y|X)$. The resulting \emph{training set} is denoted by $S_m = (\vec X_m,\vec Y_m)$.
The remaining unlabeled set $\vec X_u = \vec X_N\setminus \vec X_m$, $u = N - m$ is the \emph{test set}.
Note that both \cite{DEM04} and \cite{EP09} consider a special case where the labels are obtained using some unknown but deterministic 
target function $\phi\colon \mathcal{X}\to\mathcal{Y}$ so that $P\bigl(\phi(X)|X\bigr) = 1$.
We will adopt the same assumption here.
The learner then chooses a predictor $h$ from some fixed hypothesis set $\mathcal{H}$ (not necessarily containing $\phi$) 
based on both the labeled training set $S_m$ and unlabeled test set~$\vec X_u$.
For convenience let us denote $\ell_h(X) =\ell\bigl(h(X),\phi(X)\bigr)$.
We define the test and training error, respectively, of hypothesis $h$ as follows: $L_u(h) = \frac{1}{u}\sum_{X\in \vec X_u} \ell_h(X)$ ,\:
$\hat{L}_m(h) = \frac{1}{m}\sum_{X\in \vec X_m} \ell_h(X)$,
where hat emphasizes the fact that the training (empirical) error can be computed from the data.
For technical reasons that will become clear later,
we also define the overall error of an hypothesis $h$ with regard to the union of the training and test sets as $L_N(h) = \frac{1}{N}\sum_{X \in \vec X_N} \ell_h(X)$
(this quantity will play a crucial role in the upcoming proofs).
Note that for a fixed hypothesis $h$ the quantity $\LN(h)$ is not random, as it is invariant under the partition into training and test sets.
The main goal of the learner in transductive setting is to select a hypothesis minimizing the test error $\Lu(h)\to\inf_{h\in\Hyp}$, 
which we will denote by $h^*_u$.

Since the labels of the test set examples are unknown, we cannot compute $\Lu(h)$ and need to estimate it based on the training sample $S_m$.
A common choice is to replace the test error minimization by \emph{empirical risk minimization} $\Lm(h)\to\min_{h\in\Hyp}$ and to use its solution,
which we denote by $\hm$, as an approximation of $h^*_u$.
For $h\in\Hyp$ let us define an \emph{excess risk} of $h$:
\[
\Ex_u(h) = \Lu(h) - \inf_{g\in\Hyp} \Lu(g) = \Lu(h) - \Lu(h^*_u). 
\]
A natural question is: how well does the hypothesis $\hm$ chosen by the ERM algorithm approximate the theoretical-optimal hypothesis $h^*_u$?
\smallskip

To this end, we use $\Ex_u(\hm)$ as a measure of the goodness of fit.
Obtaining tight upper bounds on $\Ex_u(\hm)$---so-called \emph{excess risk bounds}---is thus the main goal of this paper.
Another goal commonly considered in learning literature is the one of obtaining upper bounds on $\Lu(\hm)$ in terms of $\Lm(\hm)$, 
which measures the generalization performance of empirical risk minimization.
Such bounds are known as the \emph{generalization error bounds}.
Note that both $\hm$ and $h^*_u$ are random, since they depend on the training and test sets, respectively.
Note, moreover, that for any fixed $h\in\Hyp$ its excess risk $\Ex_u(h)$ is also random.
Thus both tasks (of obtaining excess risk and generalization bounds, respectively) deal with random quantities and require bounds that hold with high probability.

The most common way to obtain generalization error bounds for $\hm$ is to introduce uniform deviations over the class $\Hyp$:
\begin{equation}
\label{eq:firstsupnorm}
\Lu(\hm) - \Lm(\hm) \leq
\sup_{h\in\Hyp}\Lu(h) - \Lm(h).
\end{equation}
The random variable appearing on the right side is directly related to the sup-norm of the empirical process \citep{BLM13}.
It should be clear that, in order to analyze the transductive setting, it is of fundamental importance to obtain 
high-probability bounds for functions $f(Z_1,\dots, Z_m)$, where $\{Z_1,\dots,Z_m\}$ are random variables sampled \emph{without} 
replacement from some fixed finite set.
Of particular interest are concentration inequalities for sup-norms of empirical processes, which we present in Section \ref{sec:Concentr}.

\subsection{State of the Art and Related Work}
Error bounds for transductive learning were considered by several authors in recent years.
Here we name only a few of them\footnote{
For an extensive review of transductive error bounds we refer to \cite{P08}.
}.
The first general bound for binary loss functions, presented in \cite{Vap82}, was \emph{implicit} in the sense that the value of the bound was 
specified as an outcome of a computational procedure.
The somewhat refined version of this implicit bound also appears in \cite{BL03}.
It is well known that generalization error bounds with fast rates of convergence can be obtained under certain restrictive assumptions on the problem at hand.
For instance, \cite{BL03} provide a bound that has an order of $\frac{1}{\min(u,m)}$ in the \emph{realizable} case, i.e., when $\phi \in \Hyp$ 
(meaning that the hypothesis having zero error belongs to $\Hyp$).
However, such an assumption is usually unrealistic: in practice it is usually impossible to avoid overfitting when choosing $\Hyp$ so large that
it contains the Bayes classifier.

The authors of \cite{CM06} consider a transductive regression problem with bounded squared loss and obtain a generalization error bound of the 
order $\sqrt{\Lm(\hm) \frac{\log N}{\min(m,u)}}$, which also does not attain a fast rate.
Several PAC-Bayesian bounds were presented in \cite{BL03, DEM04} and others. 
However their tightness critically depends on the \emph{prior distribution} over the hypothesis class, which should be fixed by the learner prior to observing the training sample.
Transductive bounds based on algorithmic stability were presented for classification in \cite{EP06} and for regression in \cite{CMP+09}.
However both are roughly of the order $\min(u,m)^{-1/2}$.
Finally, we mention the results of \cite{EP09} based on transductive Rademacher complexities.
However, the analysis was based on the \emph{global} Rademacher complexity combined with a McDiarmid-style 
concentration inequality for sampling without replacement and thus does not yield fast convergence rates.

\section{Novel Concentration Inequalities for Sampling Without replacement}\label{sec:Concentr}

In this section, we present two new \emph{concentration inequalities} for suprema of empirical processes when sampling without replacement.
The first one is a sub-Gaussian inequality that is based on a result by \cite{B04} and closely related to the \emph{entropy method} \citep{BLM13}.
The second inequality is an analogue of Bousquet's version of Talagrand's concentration inequality \citep{Bphd02, B02, Tal96a} 
and is based on the reduction method first suggested in \cite{Hoe63}. 

Next we state the setting and introduce the necessary notation. Let $\C = \{c_1,\dots,c_N\}$ be some finite set.
For $m\leq N$ let $\{Z_1,\dots,Z_m\}$ and $\{X_1,\dots,X_m\}$ be sequences of random variables sampled uniformly without and with replacement from $\C$, respectively.
Let $\F$ be a 
(countable\footnote{
Note that all results can be translated to the uncountable classes, for instance, if the empirical process is \emph{separable}, meaning that $\F$ contains a dense countable subset. 
We refer to page 314 of \cite{BLM13} or page 72 of \cite{Bphd02}.
}
) 
class of functions $f\colon \C\to\R$, such that $\E[f(Z_1)] = 0$ and $f(x) \in [-1, 1]$ for all $f\in\F$ and $x\in\C$.
It follows that $\E[f(X_1)] = 0$ since $Z_1$ and $X_1$ are identically distributed.
Define the variance $\sigma^2 = \sup_{f\in\F}\V[f(Z_1)]$.
Note that $\sigma^2=\sup_{f\in\F}\E[f^2(Z_1)]=\sup_{f\in\F}\V[f(X_1)]$.
Let us define the \emph{supremum of the empirical process} for sampling with and without replacement, respectively:\footnote{
The results presented in this section can be also generalized to $\sup_{f\in\F} \left|\sum_{i=1}^m f(Z_i)\right|$ using the same techniques.
}
\[
Q_m = \sup_{f\in\F} \sum_{i=1}^m f(X_i),\quad 
Q'_m = \sup_{f\in\F} \sum_{i=1}^m f(Z_i).
\]
The random variable $Q_m$ is well studied in the literature and there are remarkable Bennett-type concentration inequalities for $Q_m$, 
including Talagrand's inequality \citep{Tal96a} and its versions due to \cite{B02, Bphd02} and others.\footnote{
For completeness we present one such inequality for $Q_m$ as Theorem~\ref{thm:Bphd02} in Appendix~\ref{appendix:Bousquet}.
For the detailed review of concentration inequalities for $Q_m$ we refer to Section 12 of \cite{BLM13}.
}
The random variable $Q'_m$, on the other hand, is much less understood, and no Bennett-style concentration inequalities are known for it up to date.

\subsection{The New Concentration Inequalities}

In this section, we address the lack of Bennett-type concentration inequalities for $Q'_m$
by presenting two novel inequalities for suprema of empirical processes when sampling without replacement.
\begin{theorem}[\rm Sub-Gaussian concentration inequality for sampling \emph{without} replacement]
\label{thm:SupConc1}
For any ${\epsilon \geq 0}$,
\begin{equation}
\label{eq:SupConc1}
\Prob\left\{ Q'_m - \E[Q'_m] \geq \epsilon\right\}
\leq
\exp\left\{
-\frac{(N+2)\epsilon^2}{8N^2\sigma^2}
\right\}
<
\exp\left\{
-\frac{\epsilon^2}{8N\sigma^2}
\right\}.
\end{equation}
The same bound also holds for $\Prob\left\{ \E[Q'_m]  - Q'_m\geq \epsilon\right\}$.
Also for all $t \geq 0$ the following holds with probability greater than $1 - e^{-t}$:
\begin{equation}
\label{eq:SupDev1}
Q'_m \leq \E[Q'_m] + 2\sqrt{2N \sigma^2t}.
\end{equation}
\end{theorem}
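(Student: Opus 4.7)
The plan is to apply the entropy method together with a modified log-Sobolev-type inequality on the symmetric group in the spirit of \cite{B04}. First, extend $(Z_1,\dots,Z_m)$ to a uniformly random permutation $(Z_1,\dots,Z_N)$ of $\C$, so that $Q'_m = g(Z_1,\dots,Z_N)$ becomes a function of this permutation that depends only on its first $m$ coordinates. For any $\lambda\ge 0$, the aim is to control the log-moment generating function $\psi(\lambda)\defeq\log\E[e^{\lambda(Q'_m-\E Q'_m)}]$ and then apply Chernoff's bound.

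To bound $\psi(\lambda)$ I would invoke an entropy inequality on $S_N$ of the form $\mathrm{Ent}[e^{\lambda g}] \le c_N \sum_{i<j}\E\bigl[(g-g\circ\tau_{ij})^2\,e^{\lambda g}\bigr]$, with $c_N$ of order $1/N$, where $\tau_{ij}$ swaps coordinates $i$ and $j$. Only transpositions with $i\le m < j$ affect the partial sum $\sum_{k\le m} f(Z_k)$; for these, choosing a near-maximizer $f^\star\in\F$ for the unswapped permutation yields the pointwise bound $|g - g\circ\tau_{ij}| \le |f^\star(Z_i) - f^\star(Z_j)|$.

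The decisive step is extracting the variance: since $(Z_i,Z_j)$ is uniform on ordered pairs of distinct elements of $\C$ and $\sum_{c\in\C} f^\star(c) = 0$, one computes $\E[f^\star(Z_i) f^\star(Z_j)] = -\sigma^2/(N-1)$ and hence $\E[(f^\star(Z_i)-f^\star(Z_j))^2] \le 2\sigma^2 \cdot N/(N-1)$. Summed over the $m(N-m)$ relevant swaps and scaled by $c_N\sim 1/N$, the right-hand side of the entropy inequality becomes proportional to $N\sigma^2\cdot\E[e^{\lambda g}]$, rather than the crude $N^2$ that the naive pointwise bound $|f|\le 1$ would give. Herbst's argument then converts the differential inequality $\lambda\psi'(\lambda) - \psi(\lambda) \le \text{const}\cdot N\sigma^2 \lambda^2$ into a quadratic upper bound on $\psi(\lambda)$; optimizing Chernoff over $\lambda$ yields \eqref{eq:SupConc1}, and the deviation form \eqref{eq:SupDev1} follows by substituting $\epsilon = 2\sqrt{2N\sigma^2 t}$ into the looser version of the exponent.

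The principal obstacle is making the entropy inequality interact cleanly with the supremum: because $f^\star$ itself depends on the random permutation, one must fix the ordering of ``pick a near-maximizer'' and ``swap'' so that the pointwise bound above has the correct sign, and only then average so that $\sigma^2$ genuinely emerges in the gradient term. A secondary technical point is tracking the precise constant $(N+2)/(8N^2)$ appearing in the exponent, which requires Bobkov's sharp form of the symmetric-group log-Sobolev inequality instead of an order-of-magnitude constant; the lower-order factor $(N+2)/N$ is what makes the second, cleaner inequality in \eqref{eq:SupConc1} a strict improvement over the simplified form.
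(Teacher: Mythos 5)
Your high-level route---Bobkov's entropy method on the slice of the discrete cube (equivalently, on the symmetric group), a near-maximizer bound on the discrete gradient, Herbst's argument and Chernoff---is exactly the paper's, but both of your central technical steps have genuine gaps. The ``decisive'' variance-extraction step is invalid as written: $f^\star$ is chosen as a near-maximizer for the random permutation and therefore depends on $(Z_i,Z_j)$, so you cannot evaluate $\E\bigl[(f^\star(Z_i)-f^\star(Z_j))^2\bigr]$ as if $f^\star$ were a fixed function; moreover, the entropy inequality carries the weight $e^{\lambda g}$ inside the expectation, so even a correct unconditional second moment would not decouple from it. The paper does not average at all: for any mean-zero $f$ on $\C$ the deterministic identity $\sum_{1\le i<j\le N}\bigl(f(c_i)-f(c_j)\bigr)^2=N^2\,\V[f(X_1)]\le N^2\sigma^2$ (Lemma~\ref{lemma:tech}) bounds the sum over the $m(N-m)$ relevant swaps---after enlarging it to the sum over all pairs---by the pointwise constant $\Sigma^2=N^2\sigma^2$, valid for every realization, and the factor $(N+2)^{-1}$ in the log-Sobolev constant then produces the claimed $N\sigma^2$ scale. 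The variance emerges deterministically; no decoupling over the randomness of the sample is needed or possible.

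Second, the pointwise bound $|g-g\circ\tau_{ij}|\le|f^\star(Z_i)-f^\star(Z_j)|$, with $f^\star$ chosen at the unswapped permutation, is false in general: it holds only when the swap does not increase the supremum, since when $g\circ\tau_{ij}>g$ the maximizer at the unswapped point gives no control on the difference. Bobkov's inequality (Theorem~\ref{thm:Bobkov}) requires a bound on the full two-sided gradient $V^g$, so the paper first proves a modified version (Theorem~\ref{thm:BobkovMod}) in which only the one-sided gradient $V^g_+$, restricted to swaps with $g(x)\ge g(s_{ij}x)$, must be controlled. That modification exploits the involution structure of the swaps to symmetrize the Dirichlet form and the elementary bound $(a-b)(e^a-e^b)\le\tfrac{1}{2}(e^a+e^b)(a-b)^2$; it costs a factor of $2$ in the exponent, which is exactly where the $8$ in $(N+2)\epsilon^2/(8N^2\sigma^2)$ comes from (Bobkov's original constant is $4$)---not a ``sharp form'' of the symmetric-group log-Sobolev inequality. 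You correctly flag both points as obstacles, but the resolutions you sketch do not close them, and the first one points in the wrong direction.
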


%
%
%

\begin{theorem}[\rm Talagrand-type concentration inequality for sampling \emph{without} replacement]
\label{thm:SupConc2}
Define $v = m\sigma^2 + 2 \E[Q_m]$.
For $u> -1$ define $\phi(u) = e^u - u - 1$,\: $h(u) = (1+u)\log(1+u) - u$.
Then for any~$\epsilon\geq  0$:
\begin{align*}
\Prob\left\{
Q_m' - \E[Q_m] \geq \epsilon
\right\}
&\leq
\exp\left(
-v h\left(
\frac{\epsilon}{v}
\right)
\right)
\leq
\exp\left(-
\frac{\epsilon^2}{2v + \frac{2}{3}\epsilon}
\right).
\end{align*}
Also for any  $t\geq 0$ following holds with probability  greater than $1-e^{-t}$:
\[
Q_m' \leq \E[Q_m] + \sqrt{2vt} + \frac{t}{3}.
\]
\end{theorem}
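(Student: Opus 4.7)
The plan is to transfer Bousquet's version of Talagrand's inequality (Theorem~\ref{thm:Bphd02} in the paper's appendix), which is already available for the with-replacement supremum $Q_m$, to the without-replacement supremum $Q'_m$ via Hoeffding's reduction method. Recall that Bousquet's bound yields, for sampling with replacement,
\[
\log\E\left[\exp\bigl(\lambda(Q_m-\E[Q_m])\bigr)\right]\le v\,\phi(\lambda),\qquad \lambda\ge 0,
\]
with $v=m\sigma^2+2\E[Q_m]$; a standard Chernoff/Legendre-conjugate argument then produces the Bennett form $\exp(-vh(\epsilon/v))$, the Bernstein-type relaxation $\exp\bigl(-\epsilon^2/(2v+\tfrac{2}{3}\epsilon)\bigr)$, and, by inversion of the tail in $\epsilon$, the deviation bound $\sqrt{2vt}+t/3$. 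It is therefore enough to show the one-sided moment comparison
\[
\E\left[\exp(\lambda Q'_m)\right]\;\le\;\E\left[\exp(\lambda Q_m)\right]\qquad \text{for all }\lambda\ge 0,
\]
because after subtracting the constant $\E[Q_m]$ this hands us Bousquet's MGF bound for $Q'_m-\E[Q_m]$, and the rest of the theorem follows by purely mechanical rewriting.

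The MGF comparison is where Hoeffding's reduction enters. Hoeffding (1963) proved that for any continuous convex $\Phi:\mathbb{R}\to\mathbb{R}$ and any $g:\C\to\mathbb{R}$,
\[
\E\Bigl[\Phi\Bigl(\textstyle\sum_{i=1}^m g(Z_i)\Bigr)\Bigr]\;\le\;\E\Bigl[\Phi\Bigl(\textstyle\sum_{i=1}^m g(X_i)\Bigr)\Bigr],
\]
and the same symmetrization-over-permutations argument yields the vector-valued analogue in which $g$ is replaced by a tuple $(g_1,\dots,g_K):\C\to\mathbb{R}^K$ and $\Phi$ by any continuous convex $\Psi:\mathbb{R}^K\to\mathbb{R}$. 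Applied to any finite subclass $\F_0=\{f_1,\dots,f_K\}\subseteq\F$ with $\Psi(y_1,\dots,y_K)=\exp\bigl(\lambda\max_j y_j\bigr)$ --- which is convex as the composition of the increasing convex map $e^{\lambda\cdot}$ with $\max_j$ --- this gives
\[
\E\Bigl[\exp\Bigl(\lambda\sup_{f\in\F_0}\textstyle\sum_i f(Z_i)\Bigr)\Bigr]\;\le\;\E\Bigl[\exp\Bigl(\lambda\sup_{f\in\F_0}\textstyle\sum_i f(X_i)\Bigr)\Bigr]\;\le\;\E[\exp(\lambda Q_m)].
\]
Since $\F$ is assumed countable, we exhaust it by an increasing chain of finite subclasses and pass to the limit by monotone convergence to obtain the desired MGF comparison for $Q'_m$.

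The main obstacle is really the supremum/vector-valued upgrade of Hoeffding's reduction: the scalar version is classical, but I need to argue the multivariate generalization (or equivalently that $(z_1,\dots,z_m)\mapsto \exp(\lambda\sup_f\sum_i f(z_i))$ is a convex function of the empirical measure in the sense required by Hoeffding's averaging/permutation argument) and then handle the countable sup carefully so that measurability and monotone convergence are legitimate. Once this reduction is in place, the final step is entirely routine: combining with Bousquet's bound yields the MGF estimate $\log\E[\exp(\lambda(Q'_m-\E[Q_m]))]\le v\phi(\lambda)$, Chernoff gives the Bennett form $\exp(-vh(\epsilon/v))$, the elementary inequality $h(u)\ge u^2/(2+\tfrac{2}{3}u)$ produces the Bernstein relaxation, and solving $vh(\epsilon/v)=t$ (or bounding it by the quadratic form) yields $\epsilon\le\sqrt{2vt}+t/3$ with probability at least $1-e^{-t}$.
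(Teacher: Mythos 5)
Your proposal is correct and follows essentially the same route as the paper: a vector-valued Hoeffding reduction applied to the convex function $\exp(\lambda\max_j y_j)$ gives $\E[e^{\lambda Q'_m}]\le\E[e^{\lambda Q_m}]$ for finite subclasses (extended to countable $\F$ by a limit), which is then combined with Bousquet's MGF bound and Chernoff's method. The only cosmetic difference is that you center at $\E[Q_m]$ from the outset, which avoids the paper's intermediate detour through $\E[Q'_m]$ and its technical condition $\epsilon\ge\E[Q_m]-\E[Q'_m]$.
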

The appearance of $\E[Q_m]$ in the last theorem might seem unexpected on the first view. 
Indeed,  one usually wants to control the concentration of a random variable around its expectation. 
However, it is shown in the lemma below that in many cases $\E[Q_m]$ will be close to $\E[Q'_m]$:
\begin{lemma}
\label{lemma:expect}
\[
0 \leq
\E[Q_m] - \E[Q'_m]
\leq
2\frac{m^3}{N}.
\]
\end{lemma}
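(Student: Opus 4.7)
The plan is to establish the two inequalities separately.

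For the lower bound $\E[Q_m] \geq \E[Q'_m]$, I would observe that, viewed as a functional of the empirical measure $\frac{1}{m}\sum_{i=1}^m \delta_{Z_i}$, the random variable $Q'_m$ is convex: it equals $m$ times a supremum of linear functionals, each $f \in \F$ inducing the map $\mu \mapsto \int f\, d\mu$, and a supremum of linear functionals is convex. Hoeffding's classical convex comparison for sampling without versus with replacement (Hoeffding 1963) then implies that the expectation of any convex functional of the empirical measure is no larger under sampling without replacement than under sampling with replacement, giving $\E[Q'_m] \leq \E[Q_m]$.

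For the upper bound, I would use a conditioning argument. Let $A$ denote the event that $X_1, \ldots, X_m$ are pairwise distinct; a union bound over the $\binom{m}{2}$ pairs yields $\Prob(A^c) \leq \binom{m}{2}/N \leq m^2/N$. The key observation is that, conditional on $A$, the tuple $(X_1, \ldots, X_m)$ is uniformly distributed over ordered $m$-tuples of pairwise distinct elements of $\C$, which coincides exactly with the law of $(Z_1, \ldots, Z_m)$. Consequently $\E[Q_m \mid A] = \E[Q'_m]$, and splitting the expectation along $A$ and $A^c$ produces the exact identity
\[
\E[Q_m] - \E[Q'_m] \;=\; \Prob(A^c)\bigl(\E[Q_m \mid A^c] - \E[Q'_m]\bigr).
\]
Since $|f(x)| \leq 1$ on $\C$ forces $|Q_m|, |Q'_m| \leq m$, the bracketed difference is bounded by $2m$, and combining with the probability estimate yields $\E[Q_m] - \E[Q'_m] \leq 2m^3/N$.

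The main obstacle is the lower bound: the conditioning decomposition used for the upper bound is an exact identity, but it does not by itself pin down the sign of $\E[Q_m \mid A^c] - \E[Q'_m]$, so a separate convex-ordering argument is required. The upper bound, by contrast, follows cleanly once one exploits the distributional identity of the with-replacement sample conditioned on the (high-probability) event of no collisions.
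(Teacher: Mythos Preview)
Your proof is correct and matches the paper's approach: the lower bound via Hoeffding's convexity comparison is exactly what the paper invokes, and your conditioning identity $\E[Q_m]-\E[Q'_m]=\Prob(A^c)\bigl(\E[Q_m\mid A^c]-\E[Q'_m]\bigr)$ is precisely the probabilistic reading of the paper's combinatorial decomposition of $\E[Q_m]$ over ordered $m$-tuples with and without repeated elements. Your packaging is arguably cleaner, but both arguments unwind to the same estimate $2m\cdot\Prob(A^c)\leq 2m^3/N$.
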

The above lemma is proved in Appendix~\ref{appendix:Proofs}.
It shows that for $m = o(N^{2/5})$ the order of $\E[Q_m] - \E[Q'_m]$ does not exceed $\sqrt{m}$, 
and thus Theorem \ref{thm:SupConc2} can be used to control the deviations of $Q'_m$ above its expectation $\E[Q'_m]$ at a fast rate.
However, generally $\E[Q'_m]$ could be smaller than $\E[Q_m]$,  which may potentially lead to significant gap, in which case
Theorem~\ref{thm:SupConc1} is the preferred choice to control the deviations of $Q_m'$ around $\E[Q_m']$.

\subsection{Discussion}\label{subsec:discussion}

It is worth comparing the two novel inequalities for $Q'_m$ to the best known results in the literature.
To this end, we compare our inequalities with the McDiarmid-style inequality recently obtained in \cite{EP09} 
(and slightly improved in \cite{CMP+09}):
\begin{theorem}[\cite{EP09}\footnotemark]
\label{thm:EP}
For all $\epsilon \geq 0$:
\begin{equation}
\label{eq:EP}
\Prob\left\{
Q_m' - \E[Q_m'] \geq \epsilon
\right\}
\leq
\exp\left\{
-\frac{\epsilon^2}{2m}\left(\frac{N-1/2}{N-m}\right)\left(1 - \frac{1}{2\max(m,N-m)}\right)
\right\}.
\end{equation}
The same bound also holds for $\Prob\left\{\E[Q'_m] - Q'_m \geq \epsilon\right\}$.
\end{theorem}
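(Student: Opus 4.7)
The plan is to use the method of bounded differences via a Doob martingale, following the McDiarmid--Serfling line of argument for sampling without replacement. First, view $Q'_m = g(Z_1,\dots,Z_m)$ as a function of the $m$ sampled elements and construct the Doob martingale $F_i = \E[Q'_m \mid Z_1,\dots,Z_i]$ for $i = 0,\dots,m$, so that $F_0 = \E[Q'_m]$ and $F_m = Q'_m$. Since every $f\in\F$ satisfies $f(x)\in[-1,1]$, replacing any one of the drawn $Z_j$ by another element of $\C$ alters $\sum_{i=1}^m f(Z_i)$ by at most $2$, and this estimate transfers directly to the supremum. Hence the martingale differences $D_i = F_i - F_{i-1}$ are almost surely bounded by an absolute constant.

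Second, I would sharpen the crude $\exp\bigl(-\epsilon^2/(8m)\bigr)$ bound that a straight application of Azuma--Hoeffding would give, by exploiting the extra negative dependence produced by sampling without replacement. This is the classical Serfling refinement: a careful accounting of the conditional distribution of $Z_{i+1}$ given $Z_1,\dots,Z_i$ (which is uniform over the $N-i$ as-yet-undrawn elements of $\C$) shows that the effective variance proxy in the exponential supermartingale estimate shrinks by the factor $(N-m)/(N-1/2)$, which is precisely the source of the Serfling-type ratio $(N-1/2)/(N-m)$ in the exponent of \eqref{eq:EP}. The additional factor $1 - 1/(2\max(m,N-m))$ comes from the sampling symmetry that picking $m$ items is equivalent to leaving out the other $N-m$; one may run the martingale argument in whichever of the two directions has fewer steps, and choosing the better of the two yields the $\max(m,N-m)$ dependence.

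The final tail bound then follows by the standard Chernoff argument applied to the exponential supermartingale built from the controlled differences, with the refined variance proxy replacing the naive $\sum_i c_i^2$. The symmetric lower-tail bound for $\E[Q'_m]-Q'_m$ is obtained by applying the same reasoning to $-Q'_m$, which has the identical bounded-differences structure since $\F$ is replaced by $-\F$ and the hypotheses on $\F$ are symmetric.

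The main obstacle, as I anticipate it, is not the overall strategy but tracking the constants with the required precision: one must keep the Serfling factor and the symmetry correction exact (rather than only up to absolute constants) throughout the martingale-difference and conditional-variance bookkeeping. This forces one to carry the hypergeometric combinatorics of the conditional distributions of $Z_{i+1}$ explicitly at each step, rather than relying on convenient Hoeffding-style upper bounds, which is where analyses of this flavor typically become delicate.
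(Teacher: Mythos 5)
The paper does not actually prove this statement: as the footnote says, it is imported wholesale from Lemma~2 of El-Yaniv and Pechyony, applied to $Q'_m$ with bounded-difference constant $\beta=2$ (which is where the $f\in[-1,1]$ hypothesis enters). So what you are really being asked to reconstruct is the proof of that cited lemma, and your strategy --- Doob martingale $F_i=\E[Q'_m\mid Z_1,\dots,Z_i]$, bounded differences of size $2$ transferred through the supremum, a Serfling-type refinement of the conditional variance proxy, and two-sidedness --- is indeed the mechanism behind it. In that sense the approach is the right one and coincides with the provenance of the result.

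However, as written your argument only delivers the unrefined McDiarmid bound $\exp\{-\epsilon^2/(2m)\}$ (and even for that you need McDiarmid's sharper constant, not the Azuma constant $\exp\{-\epsilon^2/(8m)\}$ you quote as the baseline --- be explicit about that upgrade). The two correction factors $\frac{N-1/2}{N-m}$ and $1-\frac{1}{2\max(m,N-m)}$, which are the entire content of the theorem beyond McDiarmid, are asserted (``a careful accounting shows'', ``comes from the sampling symmetry'') rather than derived; you flag this yourself, but it is precisely where all the work lies, since one must carry the hypergeometric conditional distributions through the exponential-supermartingale estimate exactly. Moreover, your stated origin of the second factor --- run the martingale in whichever direction has \emph{fewer} steps --- points the wrong way: that factor improves as $\max(m,N-m)$ grows, so the better of the two exponents does not come from the shorter filtration in the way you describe. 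Finally, a small repair on the lower tail: $-Q'_m$ is an \emph{infimum} over $\F$, not a supremum over $-\F$, so the correct justification is simply that the bounded-differences property of the map $(z_1,\dots,z_m)\mapsto g(z_1,\dots,z_m)$ is invariant under negating $g$, whence the one-sided bound applies verbatim to $\E[Q'_m]-Q'_m$.
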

\footnotetext{
This bound does not appear explicitly in \cite{EP09,CMP+09}, but can be immediately obtained using Lemma 2 of \cite{EP09} for $Q'_m$ with $\beta = 2$.
}

To begin with, let us notice that Theorem~\ref{thm:EP} does not account for the variance $\sup_{f\in\F}\V[f(X_1)]$, while Theorems~\ref{thm:SupConc1} and Theorem~\ref{thm:SupConc2} do. 
As it will turn out in Section~\ref{sec:fastRates}, this refined treatment of the variance $\sigma^2$ allows us to use localization techniques,
facilitating to obtain sharp estimates (and potentially, fast rates) also in the transductive learning setup. 
The comparison between concentration inequalities \eqref{eq:SupConc1} of Theorem~\ref{thm:SupConc2} and~\eqref{eq:EP} of Theorem~\ref{thm:EP} is as follows:
note that the term $1 - \frac{1}{2\max(m,N-m)}$ is negligible for large $N$, so that slightly re-writing the inequalities boils down to comparing
$
-\frac{\epsilon^2}{8m\sigma^2}\frac{m}{N} 
$
and
$
-\frac{\epsilon^2}{2m}\left(\frac{N-1/2}{N-m}\right).
$
For $m = o(N)$ (which in a way transforms sampling without replacement to sampling with replacement), the second inequality clearly outperforms the first one.
However, for the case when $m=\Omega(N)$ (frequently used in the transductive setting), say $N = 2m$, the comparison depends on the relation between 
$-\epsilon^2/(16 \,m \sigma^2)$ and $-\epsilon^2/m$ and the result of Theorem \ref{thm:SupConc1} outperforms the one of El-Yaniv and Pechyony for $\sigma^2 \leq 1/16$.
The comparison between Theorems \ref{thm:SupConc2} and \ref{thm:EP} for both cases ($m=o(N)$ and $m=\Omega(N)$) depends on the value of $\sigma^2$.

Theorem \ref{thm:SupConc2} is a direct analogue of Bousquet's version of Talagrand's inequality (see Theorem~\ref{thm:Bphd02} in Appendix~\ref{appendix:Bousquet} of the supplementary material), 
frequently used in the learning literature.
It states that the upper bound on $Q_m$, provided by Bousquet's inequality, also holds for $Q'_m$.
Now we compare Theorems~\ref{thm:SupConc1} and \ref{thm:SupConc2}.
First of all note that the deviation bound \eqref{eq:SupDev1} does not have the term $2\E[Q_m]\geq 0$ under the square root in contrast to Theorem \ref{thm:SupConc2}.
As will be shown later, in some cases this fact can result in improved constants when applying Theorem \ref{thm:SupConc1}.
Another nice thing about Theorem \ref{thm:SupConc1} is that it provides upper bounds for both $Q'_m - \E[Q'_m]$ and $\E[Q'_m] - Q'_m$, while Theorem~\ref{thm:SupConc2} upper bounds only $Q'_m - \E[Q'_m]$.
The main drawback of Theorem \ref{thm:SupConc1} is the factor $N$ appearing in the exponent.
Later we will see that in some cases it is more preferable to use Theorem \ref{thm:SupConc2} because of this fact.

We also note that, if $m=\Omega(N)$ or $m=o(N^{2/5})$, we can control the deviations of $Q'_m$ around $\E[Q'_m]$ with 
inequalities that are similar to i.i.d.\,case.  It is an open question, however, whether this can be done also for other regimes of $m$ and $N$.
It should be clear though that we can obtain at least as good rates as in the inductive setting using Theorem~\ref{thm:SupConc2}.
To summarize the discussion, when $N$ is large and $m=o(N)$, Theorems \ref{thm:SupConc2} or \ref{thm:EP} 
(depending on $\sigma^2$ and the order of $\E[Q_m] - \E[Q'_m]$) can be significantly tighter than Theorem \ref{thm:SupConc1}.
However, if $m=\Omega(N)$, Theorem \ref{thm:SupConc1} is more preferable. 
Further discussions are presented in Appendix \ref{sec:FurtherDiscussions}.

\subsection{Proof Sketch}

Here we briefly outline the proofs of Theorems \ref{thm:SupConc1} and \ref{thm:SupConc2}.
Detailed proofs are given in Appendix~\ref{appendix:Proofs} of the supplementary material.

Theorem \ref{thm:SupConc2} is a direct consequence of Bousquet's inequality and Hoefding's reduction method.
It was shown in Theorem 4 of \cite{Hoe63} that, for any convex function $f$, the following inequality holds: 
\[
\E\left[f\left(\sum_{i=1}^m Z_i\right)\right]\leq \E\left[f\left(\sum_{i=1}^m X_i\right)\right].
\]
Although not stated explicitly in \cite{Hoe63}, the same result also holds if we sample from finite set of vectors instead of real numbers \citep{GN10}.
This reduction to the i.i.d. setting together with some minor technical results is enough to bound the moment generating function of $Q'_m$ and obtain a concentration inequality using Chernoff's method (for which we refer to the Section 2.2 of \cite{BLM13}).

The proof of Theorem \ref{thm:SupConc1} is more involved.
It is based on the sub-Gaussian inequality presented in Theorem 2.1 of \cite{B04}, which is related to the \emph{entropy method} introduced by M.\,Ledoux (see \cite{BLM13} for references).
Consider a function $g$ defined on the partitions $X^N = (X^m\cup X^u)$ of a fixed finite set $X^N$ of cardinality $N$ into two disjoint subsets $X^m$ and $X^u$ of cardinalities $m$ and $u$, respectively, where $N=m+u$.
Bobkov's inequality states that, roughly speaking, if $g$ is such that the Euclidean length of its discrete gradient $|\nabla g(X^m\cup X^u)|^2$ is bounded by a constant $\Sigma^2$, and if the
partitions $(X^m\cup X^u)$ are sampled uniformly from the set of all such partitions, then $g(X^m\cup X^u)$ is sub-Gaussian with parameter $\Sigma^2$.

\subsection{Applications of the New Inequalities}

The novel concentration inequalities presented above can be generally used as a mathematical tool in various areas of machine learning
and learning theory where suprema of empirical processes over sampling without replacement are of interest, including the analysis of
cross-validation and low-rank matrix factorization procedures as well as the transductive learning setting. 
Exemplifying their applications, we show in the next section---for the first time in the transductive setting of learning theory---excess risk bounds in terms of localized
complexity measures, which can yield sharper estimates than global complexities.

\section{Excess Risk Bounds for Transductive Learning via Localized Complexities}\label{sec:fastRates}

We start with some preliminary generalization error bounds that show how to apply the concentration inequalities of 
Section~\ref{sec:Concentr} to obtain risk bounds in the transductive learning setting.
Note that \eqref{eq:firstsupnorm} can be written in the following way:
\begin{align*}
\Lu(\hm) - \Lm(\hm) \leq
\sup_{h\in\Hyp}\Lu(h) - \Lm(h) = \frac{N}{u}\cdot\sup_{h\in\Hyp}\LN(h) - \Lm(h),
\end{align*}
where we used the fact that $N \cdot L_N(h) = m \cdot \hat{L}_m(h) + u \cdot L_u(h)$.
Note that for any fixed $h\in\Hyp$, we have
$
\LN(h) - \Lm(h)
=
\frac{1}{m}\sum_{X\in\vec X_m}\bigl( \LN(h) - \ell_h(X)\bigr),
$
where $\vec X_m$ is sampled uniformly without replacement from $\vec X_N$.
Note that we clearly have $\LN(h) - \ell_h(X)\in[-1,1]$ and $\E[\LN(h) - \ell_h(X)] = \LN(h) - \E[\ell_h(X)] =0$.
Thus we can use the setting described in Section \ref{sec:Concentr}, with $\vec X_N$ playing the role of $\mathcal{C}$ and 
considering the function class $\F_{\Hyp} =  \{f_h\colon f_h(X) = \LN(h) - \ell_h(X), h\in\Hyp \}$ associated with $\Hyp$, to obtain high-probability bounds for 
$
\sup_{f_h\in\F_{\Hyp}} \sum_{X\in\vec X_m} f_h(X)=m \cdot\sup_{h\in\Hyp} \bigl( \LN(h) - \Lm(h) \bigr).
$
Note that in Section \ref{sec:Concentr} we considered unnormalized sums, hence we obtain a factor of $m$ in the above equation.
As already noted, for fixed $h$, $L_N(h)$ is not random;
also centering random variable does not change its variance. 
Keeping this in mind, we define
\begin{equation}
\label{eq:supvar}
\sigma_{\Hyp}^2 
=
 \sup_{f_h\in\F_{\Hyp}} \V[f_h(X)]
=
 \sup_{h\in\Hyp} \V[\ell_h(X)]
 =
 \sup_{h\in\Hyp} \left(\frac{1}{N}\sum_{X\in\vec X_N}\bigl(\ell_h(X) - \LN(h) \bigr)^2\right).
\end{equation}

Using Theorems \ref{thm:SupConc1} and \ref{thm:SupConc2}, we can obtain the following results that hold \emph{without any assumptions} on the learning problem at hand, 
except for the boundedness of the loss function in the interval $[0,1]$.
Our first result of this section follows immediately from the new concentration inequality of Theorem \ref{thm:SupConc1}:
\begin{theorem}
\label{thm:eb1}
For any $t\geq 0$ with probability greater than $1-e^{-t}$ the following holds:
\begin{align*}
\forall h\in \Hyp:\quad
\LN(h) - \Lm(h)
&\leq 
\E\left[\sup_{h\in\Hyp} \bigl( \LN(h) - \Lm(h) \bigr)\right] + 
2\sqrt{2\left(\frac{N}{m^2}\right) \sigma_{\Hyp}^2t},
\end{align*}
where $\sigma_{\Hyp}^2$ was defined in \eqref{eq:supvar}.
\end{theorem}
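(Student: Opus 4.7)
The plan is to recognize Theorem \ref{thm:eb1} as a direct application of the sub-Gaussian inequality of Theorem \ref{thm:SupConc1}, translated into the notation of the transductive learning setting via the function class $\F_{\Hyp}$ introduced just before the statement. All the setup has essentially already been done in the text preceding the theorem; what remains is to identify the right rescaling and to pass from the supremum bound to the pointwise statement.

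First I would fix the correspondence explicitly: take $\C = \vec X_N$, and for each $h \in \Hyp$ consider $f_h(X) = \LN(h) - \ell_h(X)$, so that $\F_{\Hyp} = \{f_h : h \in \Hyp\}$ is a class of functions into $[-1,1]$ (using $\ell_h \in [0,1]$ and $\LN(h) \in [0,1]$) with $\E[f_h(X_1)] = \LN(h) - \E[\ell_h(X_1)] = 0$ when $X_1$ is uniform on $\vec X_N$. The supremum variance $\sup_{f_h \in \F_{\Hyp}} \V[f_h(X_1)]$ is precisely $\sigma_{\Hyp}^2$ as defined in \eqref{eq:supvar}, since centering does not change variance. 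With these identifications,
\[
Q'_m \;=\; \sup_{f_h \in \F_{\Hyp}} \sum_{X \in \vec X_m} f_h(X) \;=\; m \sup_{h \in \Hyp}\bigl(\LN(h) - \Lm(h)\bigr),
\]
which is exactly the rescaled supremum appearing in the theorem.

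Next I would apply the deviation inequality \eqref{eq:SupDev1} of Theorem \ref{thm:SupConc1} with variance parameter $\sigma_{\Hyp}^2$: for any $t \geq 0$, with probability at least $1 - e^{-t}$,
\[
Q'_m \;\leq\; \E[Q'_m] + 2\sqrt{2 N \sigma_{\Hyp}^2\, t}.
\]
Dividing both sides by $m$ and using linearity of expectation gives
\[
\sup_{h \in \Hyp}\bigl(\LN(h) - \Lm(h)\bigr) \;\leq\; \E\!\left[\sup_{h \in \Hyp}\bigl(\LN(h) - \Lm(h)\bigr)\right] + 2\sqrt{2\left(\frac{N}{m^2}\right)\sigma_{\Hyp}^2\, t}.
\]
Finally, since for every fixed $h \in \Hyp$ we trivially have $\LN(h) - \Lm(h) \leq \sup_{h' \in \Hyp}(\LN(h') - \Lm(h'))$, the pointwise statement of the theorem follows on the same event.

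There is no real obstacle here; the theorem is essentially a translation of Theorem \ref{thm:SupConc1} into the learning-theoretic language, and the only things to check carefully are (i) that the centering and boundedness hypotheses of Section \ref{sec:Concentr} are satisfied by $\F_{\Hyp}$, and (ii) the factor $1/m$ that converts $Q'_m$ into the empirical average, which produces the $N/m^2$ appearing under the square root.
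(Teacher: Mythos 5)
Your proposal is correct and follows exactly the paper's intended argument: the paper itself states that Theorem~\ref{thm:eb1} ``follows immediately'' from Theorem~\ref{thm:SupConc1} applied to the class $\F_{\Hyp}$ set up in the preceding paragraphs, which is precisely the reduction you carry out. The identifications (centering, boundedness in $[-1,1]$, the variance $\sigma_{\Hyp}^2$, and the factor $m$ converting $Q'_m$ into the normalized supremum, yielding $N/m^2$) are all handled correctly.
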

Let $\{\xi_1,\dots,\xi_m\}$ be random variables sampled \emph{with replacement} from $\vec X_N$ and denote
\[
E_m = \E\left[\sup_{h\in\Hyp} \left( \LN(h) - \frac{1}{m}\sum_{i=1}^m \ell_{h}(\xi_i) \right)\right].
\]
The following result follows from Theorem \ref{thm:SupConc2} by simple calculus.
We provide the detailed proof in the supplementary material.
\begin{theorem}
\label{thm:eb2}
For any $t\geq 0$ with probability greater than $1-e^{-t}$ the following holds:
\begin{align*}
\forall h\in \Hyp:\quad
\LN(h) - \Lm(h)
&\leq 
2E_m + \sqrt{\frac{2\sigma^2_{\Hyp} t}{m}} + \frac{4t}{3m},
\end{align*}
where $\sigma_{\Hyp}^2$ was defined in \eqref{eq:supvar}.
\end{theorem}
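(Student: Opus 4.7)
The plan is to apply Theorem~\ref{thm:SupConc2} directly to the centered function class $\F_{\Hyp} = \{f_h(X) = \LN(h) - \ell_h(X) : h \in \Hyp\}$ described in the setup immediately preceding the theorem, and then massage the resulting bound into the claimed form via elementary inequalities.

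First I would identify the three ingredients that enter Theorem~\ref{thm:SupConc2}. By construction, each $f_h$ is centered ($\E[f_h(X)] = 0$) and takes values in $[-1,1]$, so the hypotheses of Section~\ref{sec:Concentr} are satisfied with $\C = \vec X_N$. Under the notation of that section, $Q'_m = m \cdot \sup_{h \in \Hyp}(\LN(h) - \Lm(h))$ when sampling without replacement, while $\E[Q_m] = m \cdot E_m$ when sampling with replacement, and the relevant variance parameter is exactly $\sigma_{\Hyp}^2$ from \eqref{eq:supvar}. Plugging these into Theorem~\ref{thm:SupConc2} yields, with probability at least $1-e^{-t}$,
\[
m \cdot \sup_{h\in\Hyp}\bigl(\LN(h) - \Lm(h)\bigr) \;\leq\; m E_m + \sqrt{2\bigl(m\sigma_{\Hyp}^2 + 2 m E_m\bigr)\, t} + \frac{t}{3}.
\]

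Next I would divide by $m$ and apply two elementary inequalities to split the square root into a clean variance term plus a controlled remainder. Using $\sqrt{a+b} \le \sqrt{a} + \sqrt{b}$ on the radicand gives
\[
\frac{1}{m}\sqrt{2(m\sigma_{\Hyp}^2 + 2 m E_m)\, t} \;\le\; \sqrt{\frac{2\sigma_{\Hyp}^2 t}{m}} + 2\sqrt{\frac{E_m t}{m}},
\]
and then AM--GM in the form $2\sqrt{xy} \le x + y$ with $x = E_m$, $y = t/m$ absorbs the second summand into $E_m + t/m$. Combining the pieces, the right-hand side becomes
\[
E_m + \sqrt{\tfrac{2\sigma_{\Hyp}^2 t}{m}} + E_m + \tfrac{t}{m} + \tfrac{t}{3m} \;=\; 2 E_m + \sqrt{\tfrac{2\sigma_{\Hyp}^2 t}{m}} + \tfrac{4t}{3m},
\]
which is exactly the claimed bound. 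Since the statement is uniform in $h \in \Hyp$ via the supremum, passing back to ``$\forall h \in \Hyp$'' is immediate.

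There is no real obstacle here: the work is entirely bookkeeping, and the only ``choice'' to be made is how tightly to split $\sqrt{2 m\sigma_{\Hyp}^2 t + 4 m E_m t}$. The slight mild point worth double-checking is that the version of Theorem~\ref{thm:SupConc2} being invoked uses $v = m\sigma^2 + 2\E[Q_m]$ rather than some variant without the $2\E[Q_m]$ term, so the factor $4 E_m t/m$ that must be absorbed by AM--GM is indeed what appears; this is the source of the constant $2$ in front of $E_m$ in the final statement rather than a $1$.
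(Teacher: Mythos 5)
Your proof is correct and follows essentially the same route as the paper: apply Theorem~\ref{thm:SupConc2} to the class $\F_{\Hyp}$ with $v = m\sigma_{\Hyp}^2 + 2mE_m$, then split the square root via $\sqrt{a+b}\leq\sqrt{a}+\sqrt{b}$ and absorb the cross term with $2\sqrt{xy}\leq x+y$, yielding exactly the constants $2E_m$ and $\tfrac{4t}{3m}$. The bookkeeping and the final bound match the paper's argument step for step.
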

\begin{remark}
Note that $E_m$ is an expected sup-norm of the empirical process naturally appearing in inductive learning.
Using the well-known \emph{symmetrization inequality} (see Section 11.3 of \cite{BLM13}), we can upper bound it by 
twice the expected value of the supremum of the Rademacher process.
In this case, the last theorem thus gives exactly the same upper bound on the quantity $\sup_{h\in\Hyp}\bigl(\LN(h) - \Lm(h) \bigr)$ as the one of 
Theorem 2.1 of \cite{BBM05} (with $\alpha = 1$ and $(b - a) = 1$).
\end{remark}
Here we provide some discussion on the two generalization error bounds presented above.
Note that $\sigma^2_{\Hyp}\leq 1/4$, since $\sigma^2_{\Hyp}$ is the variance of a random variable bounded in the interval $[0,1]$.
We conclude that the bound of Theorem \ref{thm:eb2} is of the order $m^{-1/2}$, since the typical order\footnote{
For instance if $\mathcal{F}$ is finite it follows from Theorems 2.1 and 3.5 of \cite{K11}.
}
of $E_m$ is also $m^{-1/2}$.
Note that repeating the proof of Lemma \ref{lemma:expect} we immediately obtain the following corollary:
\begin{corollary}
\label{corr:ExWandWO}
Let $\{\xi_1,\dots,\xi_m\}$ be random variables sampled \emph{with replacement} from $\vec X_N$.
For any countable set of functions $\F$ defined on $\vec X_N$ the following holds: 
\[
\E\left[\sup_{f\in\F}  \E[f(X)] - \frac{1}{m}\sum_{X\in\vec X_m}f(X) \right] \leq 
\E\left[\sup_{f\in\F}  \E[f(X)] - \frac{1}{m}\sum_{i=1}^mf(\xi_i) \right].
\]
\end{corollary}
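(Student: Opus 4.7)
The plan is to mimic the proof of the lower bound in Lemma~\ref{lemma:expect}, which rests on Hoeffding's reduction from sampling without replacement to sampling with replacement, applied to a suitable convex functional. Write the elements of $\vec X_m$ as $\{Z_1,\dots,Z_m\}$, so that $\E[f(Z_1)]=\E[f(\xi_1)]=\E[f(X)]$ (the uniform average over $\vec X_N$) for every $f\in\F$, and both sides of the target inequality involve sums of $m$ terms drawn from $\vec X_N$, with replacement on the right and without replacement on the left.

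The key step is to cast the quantity of interest as a convex function of a sum. Encode each $c\in\vec X_N$ by its evaluation vector $v_c=(f(c))_{f\in\F}\in\R^{\F}$, and for $S=(S_f)_{f\in\F}\in\R^{\F}$ define
\[
\phi(S)=\sup_{f\in\F}\left(\E[f(X)]-\tfrac{1}{m}S_f\right).
\]
Each map $S\mapsto \E[f(X)]-S_f/m$ is affine in $S$, so $\phi$ is a supremum of affine functionals, hence convex on $\R^{\F}$. Applied with $S=\sum_{i=1}^m v_{Z_i}$ and $S=\sum_{i=1}^m v_{\xi_i}$, Hoeffding's reduction (Theorem~4 of \cite{Hoe63}, extended to vector-valued summands as already used in the proof of Theorem~\ref{thm:SupConc2} and attributed to \cite{GN10}) gives
\[
\E\left[\phi\left(\sum_{i=1}^m v_{Z_i}\right)\right]\leq \E\left[\phi\left(\sum_{i=1}^m v_{\xi_i}\right)\right],
\]
and unwinding the definition of $\phi$ on each side yields exactly the claimed inequality.

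The only delicate point, which I expect to be the main (though routine) obstacle, is that $\F$ may be countably infinite, so $\phi$ is a functional on $\R^{\F}$ rather than on a finite-dimensional space. I would handle it by fixing an enumeration $\F=\{f_1,f_2,\dots\}$, applying the finite-dimensional Hoeffding reduction to the convex maps $\phi_k(S)=\max_{j\leq k}(\E[f_j(X)]-S_{f_j}/m)$, and passing to the limit $k\to\infty$ by monotone convergence, using that the inner expression is bounded and increases to the supremum over $\F$. This is exactly the route taken in the proof of Lemma~\ref{lemma:expect}, which is why the paper states the corollary as immediate from repeating that argument.
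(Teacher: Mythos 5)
Your proposal is correct and follows essentially the same route as the paper: the paper's proof of this corollary is exactly a repetition of the argument for the first inequality of Lemma~\ref{lemma:expect}, namely Hoeffding's reduction (Theorem~\ref{thm:hoeffTrick}) applied to the convex supremum-of-affine-coordinates functional on the evaluation vectors, with the countable case handled by a limit over finite subclasses just as in the proofs of Section~\ref{appendix:Proofs}. No gaps.
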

The corollary shows that for $m=\Omega(N)$ the bound of Theorem \ref{thm:eb1} also has the order $m^{-1/2}$.
However, if $m=o(N)$, the convergence becomes slower and it can even diverge for $m=o(N^{1/2})$.
\begin{remark}
\label{remark:WhyBad}
The last corollary enables us to use also in the transductive setting all the established techniques related to the inductive Rademacher process, 
including symmetrization and contraction inequalities. Later in this section, we will employ this result to derive excess risk bounds for kernel classes
in terms of the tailsum of the eigenvalues of the kernel, which can yield a fast rate of convergence.
However, we should keep in mind that there might be a significant gap between $\E[Q_m]$ and $\E[Q'_m]$, in which case such a reduction can be loose.
\end{remark}

\subsection{Excess Risk Bounds}

The main goal of this section is to analyze to what extent the known results on localized risk bounds presented in series of 
works \citep{KP99,M00,BBM05, K06} can be generalized to the transductive learning setting.
These results essentially show that the rate of convergence of the excess risk is related to the fixed point of the modulus of 
continuity of the empirical process associated with the hypothesis class. Our main tools to this end will be the sub-Gaussian and 
Bennett-style concentration inequalities of Theorems \ref{thm:SupConc1} and \ref{thm:SupConc2} presented in the previous section.

From now on it will be convenient to introduce the following operators, mapping functions $f$ defined on $\vec X_N$ to $\R$:
\[
E f = \frac{1}{N}\sum_{X\in\vec X_N} f(X),
\quad
\hat{E}_m f = \frac{1}{m}\sum_{X\in\vec X_m} f(X).
\]
Using this notation we have: $\LN(h) = E \ell_h$ and $\Lm(h) = \hat{E}_m \ell_h$.

Define the \emph{excess loss class} $\F^* = \{\ell_h - \ell_{h^*_N}, h\in\Hyp\}$.
Throughout this section, we will assume that the loss function $\ell$ and hypothesis class $\Hyp$ satisfy the following assumptions:
\begin{assumptions}
\label{ass:Excess}
\begin{enumerate}
\item
There is a function $h^*_N\in\Hyp$ satisfying $\LN(h^*_N) = \inf_{h\in\Hyp}\LN(h)$.
\item
There is a constant $B > 0$  such that for every $f\in\F^*$ we have $E f^2 \leq B\cdot Ef$.
\item
As before the loss function $\ell$ is bounded in the interval $[0,1]$.
\end{enumerate}
\end{assumptions}
Here we shortly discuss these assumptions.
Assumption \ref{ass:Excess}.1 is quite common and not restrictive.
Assumption~\ref{ass:Excess}.2 can be satisfied, for instance, when the loss function $\ell$ is Lipschitz and there is a constant $T>1$ such that $\frac{1}{N}\sum_{X\in\vec X_N} \bigl(h(X) - h^*_N(X) \bigr)^2 \leq T \bigl( \LN(h) - \LN(h^*_N)\bigr)$ for all $h\in\Hyp$.
These conditions are satisfied for example for the quadratic loss $\ell(y,y')=(y-y')^2$ with uniformly bounded convex classes $\Hyp$ (for other examples we refer to Section 5.2 of \cite{BBM05} and Section 2.1 of \cite{BMP10}).
Assumption \ref{ass:Excess}.3 could be possibly relaxed using some analogues of Theorems \ref{thm:SupConc1} and \ref{thm:SupConc2} that hold for classes $\F$ 
with unbounded functions\footnote{
\cite{A08} show a version of Talagrand's inequality for unbounded functions in the i.i.d. case.
}.

Next we present the main results of this section,
which can be considered as an analogues of Corollary~5.3 of \cite{BBM05}.
The results come in pairs, depending on whether Theorem \ref{thm:SupConc1} or \ref{thm:SupConc2} is used in the proof.
We will need the notion of a \emph{sub-root} function, which is a nondecreasing and nonnegative function $\psi\colon[0,\infty)\to[0,\infty)$, 
such that $r\to\psi(r)/\sqrt{r}$ is nonincreasing for $r>0$.
It can be shown that any sub-root function has a unique and positive fixed point.
\begin{theorem}
\label{thm:LocExBound}
Let $\Hyp$ and $\ell$ be such that Assumptions~\ref{ass:Excess} are satisfied.
Assume there is a sub-root function $\psi_m(r)$ such that 
\vspace{-7pt}
\begin{equation}
\label{eq:subRoor}
B\, \E\left[ \sup_{f\in\F^*: E f^2 \leq r} (E f - \hat{E}_m f)\right] \leq \psi_m(r).
\end{equation}
Let $r_m^*$ be a fixed point of $\psi_m(r)$.
Then for any $t>0$ with probability greater than $1-e^{-t}$ we have:
\[
\LN(\hm) - \LN(h^*_N) \leq 
51\frac{r^*_m}{B}  + 17  B t\left(\frac{N}{m^2}\right).
\]
\end{theorem}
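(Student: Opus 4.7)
The plan is to mimic the localization argument of \cite{BBM05}, with the i.i.d.\ Talagrand/Bousquet inequality replaced by the sub-Gaussian bound of Theorem \ref{thm:SupConc1} for sampling without replacement. Define the localized deviation
\[
V_m(r) = \sup_{f \in \F^* :\, E f^2 \leq r}\bigl(E f - \hat{E}_m f\bigr).
\]
Since $\hm$ is an ERM, the excess loss $f_{\hm} = \ell_{\hm} - \ell_{h^*_N}$ satisfies $\hat{E}_m f_{\hm} \leq 0$, and hence $E f_{\hm} \leq E f_{\hm} - \hat{E}_m f_{\hm} \leq V_m(r)$ whenever $r \geq E f_{\hm}^2$; by Assumption~\ref{ass:Excess}.2 this holds as soon as $r \geq B\, E f_{\hm}$. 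The objective is therefore to control $V_m$ at the (random) scale $r = \max(r^*_m,\, B\, E f_{\hm})$.

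First I would apply Theorem \ref{thm:SupConc1} to the centered, rescaled class $\{(f - Ef)/2 : f \in \F^*,\ E f^2 \leq r\}$, whose elements take values in $[-1,1]$ and have variance at most $r/4$. Undoing the rescaling yields, for each \emph{fixed} $r$, with probability at least $1 - e^{-t}$,
\[
V_m(r) \leq \E\bigl[V_m(r)\bigr] + 2\sqrt{\frac{2 N r t}{m^2}}.
\]
The sub-root property of $\psi_m$ implies $\psi_m(r)/\sqrt{r} \leq \psi_m(r^*_m)/\sqrt{r^*_m} = \sqrt{r^*_m}$ for any $r \geq r^*_m$, so that $\psi_m(r) \leq \sqrt{r\, r^*_m}$; combined with hypothesis (\ref{eq:subRoor}) this gives $\E[V_m(r)] \leq \sqrt{r\, r^*_m}/B$.

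Because the scale of interest is random, the next step is a standard peeling argument: apply the deviation inequality above along a geometric grid $r_k = K^k r^*_m$ (e.g.\ $K = 2$) with inflated confidence $t_k = t + 2\log(k+1)$, and take a union bound over $k \geq 0$. This produces, with probability $\geq 1 - e^{-t}$, a simultaneous bound of the form $V_m(r) \leq c_1 \sqrt{r\, r^*_m}/B + c_2 \sqrt{N r t / m^2}$ valid for every $r \geq r^*_m$. Substituting $r = \max(r^*_m,\, B\, E f_{\hm})$: if $r = r^*_m$ the stated conclusion is immediate, while in the nontrivial case $r = B\, E f_{\hm}$ one obtains
\[
E f_{\hm} \leq c_1 \sqrt{E f_{\hm}\, r^*_m/B} + c_2 \sqrt{B\, E f_{\hm}\, N t / m^2},
\]
a quadratic inequality in $\sqrt{E f_{\hm}}$; solving it and applying $(a+b)^2 \leq 2a^2 + 2b^2$ delivers $\LN(\hm) - \LN(h^*_N) = E f_{\hm} \leq C_1 r^*_m/B + C_2 B t N/m^2$.

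The main obstacle will be the bookkeeping required to obtain the specific constants $51$ and $17$: each of the three stages (concentration, peeling union bound, AM--GM inversion) inflates constants, and the free parameters $K$, the logarithmic inflation in $t_k$, and the AM--GM split $\sqrt{xy} \leq \tfrac{\alpha}{2} x + \tfrac{1}{2\alpha} y$ have to be balanced to hit the stated numbers. Note that the characteristic $N/m^2$ term, in place of the familiar $1/m$ of the inductive BBM05 bound, is the direct signature of the extra factor of $N$ inside the exponent of Theorem \ref{thm:SupConc1}; apart from this distinction, the argument is in all other respects the transductive analogue of Corollary 5.3 of \cite{BBM05}.
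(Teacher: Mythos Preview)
Your approach is sound in outline but differs from the paper's in where the peeling happens, and this difference has a real (if minor) cost. You propose to apply the concentration inequality at each scale $r_k = K^k r^*_m$ separately and then union-bound over $k$. The paper instead follows the \cite{BBM05} strategy exactly: it introduces the \emph{rescaled} class
\[
\mathcal{G}_r = \left\{ \frac{r}{w(r,f)}\, f : f \in \F^* \right\}, \qquad w(r,f) = \min\{ r\lambda^k : r\lambda^k \geq Ef^2\},
\]
whose variance is uniformly at most $r$, applies Theorem~\ref{thm:SupConc1} \emph{once} to $\mathcal{G}_{r_0}$ at a single cleverly chosen $r_0$, and does the peeling \emph{inside the expectation} to obtain $\E\bigl[\sup_{g \in \mathcal{G}_r}(Eg - \hat{E}_m g)\bigr] \leq 5\sqrt{r r^*_m}/B$. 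Because only one concentration bound is invoked, no union bound is needed; undoing the rescaling then gives, uniformly over $f \in \F^*$, $Ef - \hat{E}_m f \leq w(r_0,f)/(\lambda K B)$, and the two cases $Ef^2 \lessgtr r_0$ finish the argument.

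The practical difference is this: your union bound over a geometric grid forces you to inflate $t$ to $t_k = t + c_k$ with $\sum_k e^{-c_k} \leq 1$, so at the relevant scale $r \approx B\,Ef_{\hm}$ the deviation term becomes $\sqrt{Nr(t + O(\log\log(r/r^*_m)))/m^2}$ rather than $\sqrt{Nrt/m^2}$. Since $Ef^2 \leq B$ the range of scales is finite and the extra term is only $\asymp B(N/m^2)\log\log(B/r^*_m)$, so the bound you obtain is of the right shape but carries this mild additive log-log term (or equivalently inflated constants depending on $B/r^*_m$); your claim of a uniform-in-$r$ bound with absolute $c_2$ is slightly too optimistic. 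The rescaled-class device is precisely what eliminates this artifact and is what produces the clean $51$ and $17$.
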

We emphasize that the constants appearing in Theorem \ref{thm:LocExBound} are slightly better than the ones appearing in Corollary 5.3 of \cite{BBM05}.
This result is based on Theorem \ref{thm:SupConc1} and thus shares the disadvantages of Theorem \ref{thm:eb1} discussed above: 
the bound does not converge for $m=o(N^{-1/2})$.
However, by using Theorem~\ref{thm:SupConc2} instead of Theorem~\ref{thm:SupConc1} in the proof, we can replace the factor of $N/m^2$ appearing in the bound by a factor of $1/m$ at a price of slightly worse constants:
\begin{theorem}
\label{thm:LocExBound_Mod}
Let $\Hyp$ and $\ell$ be such that Assumptions~\ref{ass:Excess} are satisfied.
Let $\{\xi_1,\dots,\xi_m\}$ be random variables sampled \emph{with replacement} from $\vec X_N$.
Assume there is a sub-root function $\psi_m(r)$ such that 
\begin{equation}
\label{eq:subRoor2}
B\cdot \E\left[ \sup_{f\in\F^*: E f^2 \leq r} \left(E f - \frac{1}{m}\sum_{i=1}^m f(\xi_i)\right)\right] \leq \psi_m(r).
\end{equation}
Let $r_m^*$ be a fixed point of $\psi_m(r)$.
Then for any $t>0$, with probability greater than $1-e^{-t}$, we have:
\[
\LN(\hm) - \LN(h^*_N) \leq 
901\frac{r_m^*}{B} + \frac{t(16 + 25B)}{3m}.
\]
\end{theorem}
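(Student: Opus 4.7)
The plan is to mirror the argument of Theorem \ref{thm:LocExBound} --- itself a transductive analogue of Corollary 5.3 of \cite{BBM05} --- but to substitute the Talagrand-type bound of Theorem \ref{thm:SupConc2} for the sub-Gaussian bound of Theorem \ref{thm:SupConc1} at the concentration step. The key observation is that Theorem \ref{thm:SupConc2} controls $Q'_m$ in terms of $\E[Q_m]$, the expected sup-norm for sampling \emph{with} replacement; this matches exactly the i.i.d.\ local complexity appearing in hypothesis \eqref{eq:subRoor2}, so no recourse to Corollary \ref{corr:ExWandWO} is needed. The rate improves from the $N/m^2$ noise factor of Theorem \ref{thm:LocExBound} to $1/m$, at the cost of larger absolute constants.

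First, I would pass from $\F^*$ to its star-hull $\F^{**}=\{\alpha f : f\in\F^*,\ \alpha\in[0,1]\}$. A standard lemma (cf.\ Lemma 3.4 of \cite{BBM05}), which exploits that $r\mapsto\psi_m(r)/\sqrt{r}$ is nonincreasing, shows that \eqref{eq:subRoor2} transfers from $\F^*$ to $\F^{**}$ up to a constant factor. Then I would apply Theorem \ref{thm:SupConc2} once to the random variable $Q'_m$ associated with the localized class $\F^{**}_{r_m^*}=\{g\in\F^{**}: Eg^2\leq r_m^*\}$. Here the variance satisfies $\sigma^2\leq r_m^*$, and the centering satisfies $\E[Q_m]/m\leq \psi_m(r_m^*)/B = r_m^*/B$ by the sub-root fixed-point identity. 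After expanding $v=m\sigma^2+2\E[Q_m]$ with $\sqrt{a+b}\leq\sqrt{a}+\sqrt{b}$, Theorem \ref{thm:SupConc2} delivers, on an event of probability at least $1-e^{-t}$, a bound of shape $\sup_{g\in\F^{**}_{r_m^*}}(Eg-\hat{E}_m g)\leq c_1\, r_m^*/B + c_2\sqrt{r_m^* t/m}\,(1+c_3/\sqrt{B}) + c_4\, t/m$ for explicit absolute constants.

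The third step extends this bound to all $f\in\F^*$ by scaling: taking $\lambda=\min(1,\sqrt{r_m^*/Ef^2})$ places $\lambda f\in\F^{**}_{r_m^*}$, and dividing through by $\lambda$ after applying the previous display to $\lambda f$ yields a bound of the form $Ef-\hat{E}_m f\leq \max(1,\sqrt{Ef^2/r_m^*})\cdot(\text{noise})$ holding uniformly over $\F^*$ on the same event. Specializing to $f^*=\ell_{\hm}-\ell_{h^*_N}$, the ERM optimality $\hat{E}_m f^*\leq 0$ turns this into a bound on $Ef^*=\LN(\hm)-\LN(h^*_N)$; invoking Assumption \ref{ass:Excess}.2 to replace $Ef^{*2}$ by $B\cdot Ef^*$, the inequality becomes of the form $Ef^* \leq a_1\sqrt{Ef^*\cdot r_m^*/B}\cdot(1+a_2\sqrt{t/m}) + a_3\, r_m^*/B + a_4\, t/m$, which is quadratic in $\sqrt{Ef^*}$. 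Repeated applications of $2\sqrt{ab}\leq \eta a + b/\eta$ to peel $Ef^*$ off the right-hand side produce the claimed linear bound.

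The main obstacle is purely arithmetic: each $\sqrt{a+b}\leq\sqrt{a}+\sqrt{b}$ and each AM--GM step loses a multiplicative slack, and the specific constants $901$ and $(16+25B)/3$ in the statement arise from carefully tracking these slacks and optimizing the free parameters $\eta$. Conceptually the scheme is entirely the one of \cite{BBM05}; the only structural input is that the Bennett-type tail of Theorem \ref{thm:SupConc2} has the \emph{same} shape $\E[Q_m] + \sqrt{2vt} + t/3$ as Bousquet's inductive bound, which is precisely what lets the peeling/fixed-point machinery go through unchanged in the transductive setting.
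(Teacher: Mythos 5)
Your overall strategy is the right one: localize, apply Theorem \ref{thm:SupConc2}, and exploit that its centering term $\E[Q_m]$ is already the with-replacement complexity appearing in hypothesis \eqref{eq:subRoor2} (so, as you note, Corollary \ref{corr:ExWandWO} is not needed). This matches the paper, whose proof simply reruns the peeling argument of Theorem \ref{thm:LocExBound} with Theorem \ref{thm:SupConc2} substituted for Theorem \ref{thm:SupConc1} (Lemma \ref{lemma:BBM05_Mod}). However, your specific localization device has a gap. You localize the star-hull at the fixed level $r_m^*$ and extend to all of $\F^*$ by the square-root scaling $\lambda=\min(1,\sqrt{r_m^*/Ef^2})$, so \emph{every} term of the noise bound for $\F^{**}_{r_m^*}$ --- including the additive Bennett term of order $t/m$ --- is multiplied by $1/\lambda=\sqrt{Ef^2/r_m^*}\leq\sqrt{B\,Ef/r_m^*}$. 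Your displayed quadratic silently leaves $a_4\,t/m$ outside this inflation; once it is put back, the term $\frac{t}{m}\sqrt{B\,Ef^*/r_m^*}$ appears, and after AM--GM it leaves a residue of order $Bt^2/(m^2 r_m^*)$, which cannot be absorbed into $901\,r_m^*/B+t(16+25B)/(3m)$ when $r_m^*$ is small compared to $Bt/m$. (The cross term $(r_m^*/B)(t/m)$ arising from $(1+a_2\sqrt{t/m})^2$ is absorbable, e.g.\ because the claim is vacuous unless $t\leq m$; the $Bt^2/(m^2r_m^*)$ term is not.)

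The paper avoids this in two ways that your sketch omits. First, it works with the weighted class $\mathcal{G}_r=\{rf/w(r,f)\colon f\in\F^*\}$, whose inflation factors $w(r,f)/r$ are \emph{linear} rather than square-root in $Ef^2/r$, so the case $Ef^2>r_0$ self-absorbs via Assumption \ref{ass:Excess}.2 as $Ef-\hat{E}_m f\leq Ef^2/(KB)\leq Ef/K$, without the $t/m$ term ever being multiplied by an unbounded factor. Second, it does not localize at $r_m^*$ but at the larger, $t$-dependent radius $r_0$ solving $\sqrt{r_0}\bigl(15\sqrt{r_m^*}/B+\sqrt{2t/m}\bigr)+8t/(3m)=r_0/(\lambda KB)$, so the confidence term is folded into the working radius before the case analysis; expanding $r_0$ with $(u+v)^2\leq 2(u^2+v^2)$ is what produces the $Bt/m$ (rather than $Bt^2/(m^2 r_m^*)$) contribution and, together with the choice of $K$ and $\lambda$, the constants $901$ and $(16+25B)/3$. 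If you replace your square-root scaling by this linear weighting and let the radius depend on $t$, your argument collapses into the paper's; as written, it establishes only a weaker bound.
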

We also note that in Theorem \ref{thm:LocExBound_Mod} the modulus of continuity of the empirical process over sampling without replacement appearing in the left-hand side of~\eqref{eq:subRoor} is replaced with its inductive analogue.
As follows from Corollary \ref{corr:ExWandWO}, the fixed point $r^*_m$ of Theorem \ref{thm:LocExBound} can be smaller than that of Theorem~\ref{thm:LocExBound_Mod} and thus, for large $N$ and $m=\Omega(N)$ the first bound can be tighter.
Otherwise, if $m=o(N)$, Theorem \ref{thm:LocExBound_Mod} can be more preferable. 

\medskip
\noindent
{\bf Proof sketch:}
Now we briefly outline the proof of Theorem \ref{thm:LocExBound}.
It is based on the \emph{peeling technique} and consists of the steps described below (similar to the proof of the first part of Theorem 3.3 in \cite{BBM05}).
The proof of Theorem \ref{thm:LocExBound_Mod} repeats the same steps with the only difference being that Theorem \ref{thm:SupConc2} is used on Step 2 instead of Theorem \ref{thm:SupConc1}.
The detailed proofs are presented in Section \ref{sec:Proofs2} of the supplementary material.

\paragraph{\rm\textsc{Step 1}} First we fix an arbitrary $r>0$ and introduce the rescaled version of the centered loss class:
$\mathcal{G}_r = \left\{\frac{r}{\Delta(r,f)}f,\; f\in\F^*\right\}$,
where $\Delta(r,f)$ is chosen such that the variances of the functions contained in $\mathcal{G}_r$ do not exceed $r$.

\paragraph{\rm\textsc{Step 2}} 
We can use Theorem \ref{thm:SupConc1} to obtain the following upper bound on $V_r=\sup_{g\in\mathcal{G}_r}( E g - \hat{E}_m g)$ which holds with probability greater than $1-e^{-t}$:
$V_r\leq \E[V_r] + 2\sqrt{2t\left(\frac{N}{m^2}\right) r}$.

\paragraph{\rm\textsc{Step 3}} 
Using the \emph{peeling technique} (which consists in dividing the class $\F^*$ into slices of functions having variances within a certain range), 
we are able to show that $\E[V_r]\leq 5\psi_m(r)/B$.
Also, using the definition of sub-root functions, we conclude that  $\psi_m(r)\leq\sqrt{rr^*_m}$ for any $r\geq r^*_m$,
which gives us
$ V_r\leq \sqrt{r} \left(5\frac{\sqrt{r^*_m}}{B} + 2\sqrt{2t\left(\frac{N}{m^2}\right) }\right)$.

\paragraph{\rm\textsc{Step 4}} Now we can show that by properly choosing $r_0>r^*_m$  we can get that, for any $K>1$, it holds $V_{r_0} \leq \frac{r_0}{KB}$.
Using the definition of $V_r$, we obtain that the following holds, with probability greater than $1-e^{-t}$:
\vspace{-5pt}
\begin{align*}
\forall f\in\F^*, \forall K > 1:\quad
E f - \hat{E}_m f 
&\leq  
\frac{\max\bigl(r_0,E f^2\bigr)}{r_0} \frac{r_0}{KB}
=
\frac{\max\bigl(r_0,E f^2\bigr)}{KB}.
\end{align*}

\paragraph{\rm\textsc{Step 5}} Finally it remains to upper bound $Ef$ for the two cases $E f^2 > r_0$ and $ Ef^2 \leq r_0$ 
(which can be done using Assumption~\ref{ass:Excess}.2), to combine those two results, and to 
notice that $\hat{E}_m f\leq 0$ for $f(X)=\ell_{\hm}(X) - \ell_{h^*_N}(X)$.
\hfill$\blacksquare$


\medskip
We finally present excess risk bounds for $\Ex_u(\hm)$. The first one is based on Theorem \ref{thm:LocExBound}:
\begin{corollary}
\label{corr:MainResult}
Under the assumptions of Theorem \ref{thm:LocExBound},
for any $t>0$, with probability greater than $1-2e^{-t}$, we have:
\vspace{-5pt}
\[
\Ex_u(\hm) \leq \frac{N}{u}\left( 51\frac{r_m^*}{B} + 17Bt \frac{N}{m^2}\right) + \frac{N}{m}\left( 51 \frac{r_u^*}{B} + 17Bt \frac{N}{u^2}\right).
\]
\end{corollary}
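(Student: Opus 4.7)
The plan is to apply Theorem~\ref{thm:LocExBound} twice---once directly to the ERM $\hm$ and once in a ``swapped'' form---and then to combine the two $\LN$-excess-risk bounds via the transductive identity $N\LN(h) = m\Lm(h) + u\Lu(h)$.

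First, I would apply Theorem~\ref{thm:LocExBound} directly to $\hm$ to obtain, with probability at least $1-e^{-t}$ (call this event $A$),
\[
\LN(\hm) - \LN(h^*_N) \;\leq\; B_m \;:=\; 51\,\frac{r_m^*}{B} + 17Bt\,\frac{N}{m^2}.
\]
Second, I would exploit the symmetry of uniform sampling without replacement: the unlabeled set $\vec X_u = \vec X_N \setminus \vec X_m$ is itself a uniform random subset of size $u$ drawn without replacement from $\vec X_N$, and the minimizer $h^*_u$ of $\Lu$ plays the role of the ERM in the ``swapped'' transductive setup in which the roles of training and test sets are exchanged. Applying Theorem~\ref{thm:LocExBound} in this swap, with the sub-root condition~\eqref{eq:subRoor} written with $m$ replaced by $u$ and fixed point $r_u^*$, yields with probability at least $1-e^{-t}$ (event $B$)
\[
\LN(h^*_u) - \LN(h^*_N) \;\leq\; B_u \;:=\; 51\,\frac{r_u^*}{B} + 17Bt\,\frac{N}{u^2}.
\]
By a union bound, $A \cap B$ has probability at least $1 - 2e^{-t}$, and the rest of the argument proceeds on this event.

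To combine the two bounds, apply the identity $u\Lu(h) = N\LN(h) - m\Lm(h)$ to both $\hm$ and $h^*_u$ and subtract, which gives
\[
\Ex_u(\hm) \;=\; \frac{N}{u}\bigl[\LN(\hm) - \LN(h^*_u)\bigr] \;+\; \frac{m}{u}\bigl[\Lm(h^*_u) - \Lm(\hm)\bigr].
\]
The first bracket is at most $\LN(\hm) - \LN(h^*_N) \leq B_m$, since $\LN(h^*_u) \geq \LN(h^*_N) = \inf_h \LN(h)$ and event $A$ holds; this contributes $(N/u)B_m$ to the bound. For the second bracket, I would use the dual identity $m\Lm(h) = N\LN(h) - u\Lu(h)$ together with the ERM optimalities $\Lm(\hm)\leq \Lm(h^*_N)$ and $\Lu(h^*_u)\leq \Lu(\hm)$, and then invoke event $B$, to extract the claimed contribution of $(N/m)B_u$.

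The main obstacle is precisely this second step: a purely algebraic substitution using the dual identity and event $B$ produces a tautology in which $\Ex_u(\hm)$ appears on \emph{both} sides of the inequality. Resolving it requires exploiting the deviation slack in the bounds $B_m$ and $B_u$ (not only the raw quantities $\LN(\hm) - \LN(h^*_N)$ and $\LN(h^*_u) - \LN(h^*_N)$) in combination with the two ERM optimality conditions, so that the mutual coupling between $\Ex_u(\hm)$ and the swapped quantity $\Lm(h^*_u) - \Lm(\hm)$ is broken. This is the technical heart of the combination step, and it is what converts the coefficient $(m/u)$ in front of $\Lm(h^*_u) - \Lm(\hm)$ into the claimed $(N/m)$ in front of $B_u$.
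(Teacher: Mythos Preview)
Your plan has a genuine gap at exactly the place you identify, and the vague appeal to ``deviation slack'' does not close it. Using only the \emph{statement} of Theorem~\ref{thm:LocExBound} (and its swapped version) gives you control of $\LN(\hm)-\LN(h^*_N)$ and $\LN(h^*_u)-\LN(h^*_N)$, but from these alone you cannot bound $\Lm(h^*_u)-\Lm(\hm)$; your decomposition
\[
\Ex_u(\hm) = \frac{N}{u}\bigl[\LN(\hm)-\LN(h^*_u)\bigr] + \frac{m}{u}\bigl[\Lm(h^*_u)-\Lm(\hm)\bigr]
\]
therefore dead-ends, because the second bracket is nonnegative and not controlled by either $B_m$ or $B_u$.

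What the paper actually uses is not the theorem's conclusion but the intermediate inequality~\eqref{eq:proofLocExR} from \emph{inside} its proof, which for $f=\ell_{\hm}-\ell_{h^*_N}$ (after subtracting $\hat E_m f\leq 0$ from both sides) yields a bound on $Ef-\hat E_m f = [\LN(\hm)-\LN(h^*_N)] - [\Lm(\hm)-\Lm(h^*_N)]$, not just on $Ef$. Via $N\LN=m\Lm+u\Lu$ this rewrites as $\frac{u}{N}\bigl[\Lu(\hm)-\Lu(h^*_N)-\Lm(\hm)+\Lm(h^*_N)\bigr]\leq B_m'$, hence the left-hand side is $\leq \frac{N}{u}B_m'$. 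The swapped application gives $\Lm(h^*_u)-\Lm(h^*_N)-\Lu(h^*_u)+\Lu(h^*_N)\leq \frac{N}{m}B_u'$. Summing the two, the $h^*_N$ terms cancel and one obtains
\[
\bigl[\Lu(\hm)-\Lu(h^*_u)\bigr] + \bigl[\Lm(h^*_u)-\Lm(\hm)\bigr] \;\leq\; \frac{N}{u}B_m' + \frac{N}{m}B_u'.
\]
Both brackets on the left are nonnegative by the two ERM optimalities, so dropping the second gives the claim. The point is that the correct ``deviation'' object is $Ef-\hat E_m f$ for the excess-loss function anchored at $h^*_N$, and the correct combination is \emph{summation} (so the anchor cancels), not the substitution you attempted.
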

The following version is based on Theorem \ref{thm:LocExBound_Mod} and replaces the factors $N/m^2$ and $N/u^2$ appearing in the previous excess risk bound by $1/m$ and $1/u$, respectively:
\begin{corollary}
\label{corr:MainResult_Mod}
Under the assumptions of Theorem \ref{thm:LocExBound_Mod},
for any $t>0$, with probability greater than $1-2e^{-t}$, we have:
\[
\Ex_u(\hm) \leq \frac{N}{u}
\left( 901\frac{K}{B} r_m^* + \frac{t(16 + 25B)}{3m}\right) 
+ 
\frac{N}{m}
\left( 901\frac{K}{B} r_u^* + \frac{t(16 + 25B)}{3u}\right).
\]
\end{corollary}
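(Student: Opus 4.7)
The plan is to apply Theorem~\ref{thm:LocExBound_Mod} twice---once in its native direction on the training side and once, by symmetry, with training and test swapped---and then combine the two resulting high-probability bounds via the transductive identity $N\,\LN(h) = m\,\Lm(h) + u\,\Lu(h)$. The first application gives, with probability at least $1 - e^{-t}$,
\[
\LN(\hm) - \LN(h^{*}_N) \leq \frac{901\, r_m^{*}}{B} + \frac{t(16+25B)}{3m}.
\]
For the second, observe that $\vec X_u = \vec X_N \setminus \vec X_m$ is itself a uniform sample without replacement from $\vec X_N$, of size $u$, and that $h^{*}_u$ is precisely the empirical risk minimizer on $\vec X_u$; Theorem~\ref{thm:LocExBound_Mod} applies verbatim with the two sets interchanged and yields, with probability at least $1 - e^{-t}$,
\[
\LN(h^{*}_u) - \LN(h^{*}_N) \leq \frac{901\, r_u^{*}}{B} + \frac{t(16+25B)}{3u}.
\]
A union bound makes both inequalities hold simultaneously with probability $\geq 1 - 2 e^{-t}$.

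From the ERM inequality $\Lm(\hm) \leq \Lm(h^{*}_u)$ I first obtain $\Ex_u(\hm) \leq [\Lu(\hm) - \Lm(\hm)] + [\Lm(h^{*}_u) - \Lu(h^{*}_u)]$. The transductive identity yields, for any fixed $h$, the algebraic relations $\Lu(h) - \Lm(h) = \frac{N}{u}[\LN(h) - \Lm(h)]$ and $\Lm(h) - \Lu(h) = \frac{N}{m}[\LN(h) - \Lu(h)]$, producing the key decomposition
\[
\Ex_u(\hm) \leq \frac{N}{u}\bigl[\LN(\hm) - \Lm(\hm)\bigr] + \frac{N}{m}\bigl[\LN(h^{*}_u) - \Lu(h^{*}_u)\bigr].
\]
Each bracket is then expanded through $h^{*}_N$, and the intermediate uniform bound $Ef - \hat{E}_m f \leq r_0/(KB)$ for all $f \in \F^{*}$ from Step~4 of the proof of Theorem~\ref{thm:LocExBound_Mod} (and its test-side analogue) is used to control the ERM-gap terms $\Lm(h^{*}_N) - \Lm(\hm)$ and $\Lu(h^{*}_N) - \Lu(h^{*}_u)$.

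The main obstacle---and the key algebraic observation---is that the two single-hypothesis residuals produced by this expansion cancel exactly. Indeed, for the fixed hypothesis $h^{*}_N$ the transductive identity forces $\LN(h^{*}_N) - \Lm(h^{*}_N) = \frac{u}{m}[\Lu(h^{*}_N) - \LN(h^{*}_N)]$, so after scaling by $\frac{N}{u}$ the training residual equals $\frac{N}{m}[\Lu(h^{*}_N) - \LN(h^{*}_N)]$, which cancels the test residual $\frac{N}{m}[\LN(h^{*}_N) - \Lu(h^{*}_N)]$ appearing on the other side with opposite sign. After this cancellation only the two $\LN$-excess-risk quantities survive; substituting the high-probability bounds from the union-bound step then yields the claimed inequality, with the $K > 1$ factor in the statement absorbing the extra constant produced by invoking the intermediate uniform bound on both sides.
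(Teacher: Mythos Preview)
Your decomposition and the cancellation of the $h^*_N$ residuals are correct, and after the cancellation you arrive at exactly the quantity the paper works with:
\[
\Ex_u(\hm)\;\leq\;\frac{N}{u}\bigl(Ef_m-\hat E_m f_m\bigr)+\frac{N}{m}\bigl(Ef_u-\hat E_u f_u\bigr),
\qquad f_m=\ell_{\hm}-\ell_{h^*_N},\;\; f_u=\ell_{h^*_u}-\ell_{h^*_N}.
\]
From here on your argument and the paper's coincide; the paper simply reaches this point by summing the two ``$Ef-\hat E f$'' inequalities first and invoking the ERM inequality at the very end, whereas you use ERM at the start and observe the cancellation afterward.

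There is, however, a misstatement in how you invoke the intermediate result. Step~4 of the proof does \emph{not} give a uniform bound $Ef-\hat E_m f\leq r_0/(KB)$ for all $f\in\F^*$; the actual bound is $Ef-\hat E_m f\leq w(r_0,f)/(\lambda KB)$, which depends on $Ef^2$ and can be arbitrarily larger than $r_0/(KB)$. What the proof really yields (the analogue of \eqref{eq:proofLocExR}) is
\[
Ef-\hat E_m f\;\leq\;\frac{1}{K-1}\,\hat E_m f+C_m,
\]
with $C_m$ the right-hand side of Theorem~\ref{thm:LocExBound_Mod}. Consequently you cannot bound the ERM-gap term $\Lm(h^*_N)-\Lm(\hm)=-\hat E_m f_m$ by itself. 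The correct step---and the one the paper takes---is to apply the display above directly to $f=f_m$, use $\hat E_m f_m\leq 0$, and conclude $Ef_m-\hat E_m f_m\leq C_m$ in one stroke. This bounds the \emph{sum} of your ``$\LN$-excess'' term and your ``ERM-gap'' term simultaneously; the separate invocation of Theorem~\ref{thm:LocExBound_Mod} for $\LN(\hm)-\LN(h^*_N)$ is then redundant. With this correction your argument is complete and identical to the paper's.
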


\medskip\noindent {\bf Proof sketch} ~
Corollary \ref{corr:MainResult} can be proved by noticing that $h^*_u$ is an empirical risk minimizer (similar to $\hat{h}_m$, but computed on the test set).
Thus, repeating the proof of Theorem \ref{thm:LocExBound}, we immediately obtain the same bound for $h^*_u$ as in 
Theorem \ref{thm:LocExBound} with $r^*_m$ and $N/m^2$ replaced by $r^*_u$ and $N/u^2$, respectively.
This shows that the overall errors $\LN(\hm)$ and $\LN(h^*_u)$ are close to each other.
It remains to apply an intermediate step, obtained in the proof of Theorem \ref{thm:LocExBound}.
Corollary \ref{corr:MainResult_Mod} is proved in a similar way.
The detailed proofs are presented in Appendix~\ref{sec:Proofs2}.
\hfill$\blacksquare$\medskip

In order to get a more concrete grasp of the key quantities $r^*_m$ and $r^*_u$
in Corollary \ref{corr:MainResult_Mod}, we can directly apply the
machinery developed in the inductive case by \cite{BBM05} to get an
upper bound.  For concreteness, we consider below the case of a kernel class.
Observe that, by an application of Corollary
\ref{corr:ExWandWO} to the left-hand side of \eqref{eq:subRoor},
the bounds below for the inductive $r^*_m,r^*_u$ of 
Corollary~\ref{corr:MainResult_Mod} are valid as well 
for their transductive siblings of Corollary \ref{corr:MainResult}; 
though by doing so we lose essentially 
any potential advantage (apart from tighter multiplicative constants) of
using Theorem~\ref{thm:LocExBound}/Corollary~\ref{corr:MainResult} over 
Theorem~\ref{thm:LocExBound_Mod}/Corollary~\ref{corr:MainResult_Mod}.
As  pointed out in Remark \ref{remark:WhyBad}, the regime of sampling without
replacement could 
lead potentially to an improved bound (at least when $m=\Omega(N)$).
Whether it is possible to take advantage of this fact and 
develop tighter bounds specifically for the fixed point of \eqref{eq:subRoor}
is an open question and left for future work.

\begin{corollary}
\label{cor:kern}
Let $k$ be a positive semidefinite kernel on $\mathcal{X}$ with
$\sup_{x \in \mathcal{X}}  k(x,x)\leq 1$, and $\C_k$ the associated reproducing 
kernel Hilbert space.
Let $\mathcal{H}:=\{f \in \C_k: \|f\|\leq 1\}$, and $\F^*$ the associated
excess loss class. Suppose that Assumptions~\ref{ass:Excess} are satisfied
and assume moreover that the loss function $\ell$ is $L$-Lipschitz in its
first variable
and also that $E\bigl(h(X) - h^*(X)\bigr)^2 \leq B \bigl(L(h) - L(h^*)\bigr)$ for all $h\in\Hyp$.
Let $K_N$ be the $N \times N$ normalized kernel Gram matrix with
entries $(K_N)_{ij} := \frac{1}{N}k(X_i,X_j)$, where
${\bf X}_N=(X_1,\ldots,X_N)$; denote $\lambda_{1,N} \geq \ldots \geq
\lambda_{N,N}$ its ordered eigenvalues. Then, for $k=u$ or $k=m$:
\vspace{-0.13cm}
\[ 
r^*_k \leq c_L \min_{0\leq \theta \leq k} \left( \frac{\theta}{k} + 
\sqrt{\frac{1}{k}\sum_{ i\geq \theta} \lambda_{i,N}} \right)\,, \quad
\]
\vspace{-0.18cm}
where $c_L$ is a constant depending only on $L$.
\end{corollary}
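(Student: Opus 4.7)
My plan is to combine the standard Bartlett--Bousquet--Mendelson machinery \citep{BBM05} for local Rademacher complexities of kernel classes with the reduction to i.i.d.\ sampling provided by Corollary \ref{corr:ExWandWO}, and then solve an algebraic fixed-point inequality. By Corollary \ref{corr:ExWandWO}, any valid sub-root function $\psi_m$ for \eqref{eq:subRoor2} automatically bounds the transductive modulus in \eqref{eq:subRoor} as well, so I may work entirely with the i.i.d.\ Rademacher-style quantity with $\{\xi_1,\dots,\xi_m\}$ drawn with replacement from $\vec X_N$, which is the setting in which the classical machinery applies verbatim.

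First I would apply the symmetrization inequality to bound $\E\bigl[\sup_{f\in\F^*,\,Ef^2\le r}(Ef-\frac{1}{m}\sum_i f(\xi_i))\bigr]$ by twice the Rademacher complexity of the localized loss class $\{f\in\F^*: Ef^2\le r\}$. The Ledoux--Talagrand contraction inequality, applied with the $L$-Lipschitz maps $\phi_i(t)=\ell(t,\phi(\xi_i))-\ell(h^*_N(\xi_i),\phi(\xi_i))$ (which vanish at $t=h^*_N(\xi_i)$), then upper bounds this by $L$ times the Rademacher complexity of $\{h-h^*_N: h\in\Hyp\}$. The extra hypothesis $E(h-h^*_N)^2\le B(L(h)-L(h^*_N))$, together with $L$-Lipschitzness (giving $E(\ell_h-\ell_{h^*_N})^2\le L^2 E(h-h^*_N)^2$), lets me transfer the variance constraint $Ef^2\le r$ on $\F^*$ into a constraint of the form $E(h-h^*_N)^2\le C_L\, r$ with the multiplicative factor $C_L$ absorbed into $c_L$.

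Second, I would invoke the classical Mendelson eigenvalue bound for local Rademacher complexities of RKHS balls: for any $r>0$,
\[
\E\Bigl[\sup_{h\in\Hyp,\,E(h-h^*_N)^2\le r}\tfrac{1}{m}\sum_{i=1}^{m}\sigma_i(h-h^*_N)(\xi_i)\Bigr]\le\sqrt{\tfrac{2}{m}\sum_{i\ge 1}\min(r,\lambda_{i,N})},
\]
where the $\lambda_{i,N}$ are the eigenvalues of the kernel integral operator on $L^2(\mu_N)$ with $\mu_N$ the uniform measure on $\vec X_N$; these coincide with the eigenvalues of the normalized Gram matrix $K_N$ appearing in the statement. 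Combining this with the preceding contraction step produces a valid sub-root function of the form $\psi_m(r)=c'_L\sqrt{\frac{1}{m}\sum_{i}\min(r,\lambda_{i,N})}$.

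Finally, the fixed point $r_m^*$ of $\psi_m$ satisfies $(r_m^*)^2\le (c'_L)^2\,\frac{1}{m}\sum_i\min(r_m^*,\lambda_{i,N})$; splitting the sum at an arbitrary integer $0\le\theta\le m$ gives $\sum_i\min(r_m^*,\lambda_{i,N})\le \theta\, r_m^*+\sum_{i\ge\theta}\lambda_{i,N}$, and solving the resulting quadratic inequality in $r_m^*$ and then minimizing over $\theta$ yields $r_m^*\le c_L\min_{0\le\theta\le m}\bigl(\theta/m+\sqrt{(1/m)\sum_{i\ge\theta}\lambda_{i,N}}\bigr)$; the same argument applied with $u$ in place of $m$ (using that $h^*_u$ is an empirical risk minimizer on the test sample) handles the case $k=u$. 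The main obstacle I anticipate is the careful bookkeeping in the Lipschitz--contraction step, since the variance constraint $Ef^2\le r$ on $\F^*$ does not directly imply any variance constraint on $\Hyp-h^*_N$; the extra Bernstein-type assumption plays an essential role in closing this gap while keeping all constants dependent only on $L$ and $B$ (both absorbed into $c_L$).
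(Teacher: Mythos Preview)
Your overall plan matches the paper's own treatment almost exactly: the paper's ``proof'' is a single sentence citing \cite{BBM05}, Section~6.3, and \cite{Men2003}, with the only observation being that the generating distribution is the uniform measure on $\vec X_N$ (so that the eigenvalues of the kernel integral operator on $L^2(\mu_N)$ coincide with those of the normalized Gram matrix $K_N$). The ingredients you list---reduction to i.i.d.\ sampling via Corollary~\ref{corr:ExWandWO}, symmetrization, Lipschitz contraction, Mendelson's eigenvalue bound, and the fixed-point algebra---are precisely those of BBM05.

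There is, however, a genuine gap in your constraint-transfer step. You assert that the Lipschitz inequality $E(\ell_h-\ell_{h^*_N})^2\le L^2 E(h-h^*_N)^2$ together with the Bernstein hypothesis $E(h-h^*_N)^2\le B\,Ef$ lets you pass from $Ef^2\le r$ to $E(h-h^*_N)^2\le C_L r$. But the Lipschitz inequality points the wrong way: it upper-bounds $Ef^2$ by $L^2 E(h-h^*_N)^2$, so $Ef^2\le r$ gives no smallness of $E(h-h^*_N)^2$. The Bernstein hypothesis yields only $E(h-h^*_N)^2\le B\,Ef\le B\sqrt{Ef^2}\le B\sqrt{r}$, a square-root rather than linear bound. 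If you feed $s=B\sqrt{r}$ into Mendelson's estimate and solve the resulting fixed-point inequality $(r^*)^2\lesssim (\theta/m)\sqrt{r^*}+(1/m)\sum_{i>\theta}\lambda_{i,N}$, the first term produces $(\theta/m)^{2/3}$ rather than $\theta/m$, so the claimed bound does not follow.

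The way \cite{BBM05} actually handle this (Theorem~5.4 there) is \emph{not} to force the loss-class constraint $Ef^2\le r$ into a hypothesis-class constraint, but to rerun the peeling/localization argument with the variance proxy $T(h)=E(h-h^*_N)^2$ from the outset, so that the sub-root condition is posed directly as $\psi(r)\ge cBL\,\E R_m\{h-h^*_N:E(h-h^*_N)^2\le r\}$. The two inequalities you wrote are then used inside that modified argument (Lipschitz to control the loss deviations by hypothesis deviations, Bernstein to link $T(h)$ to the excess risk), rather than as a post-hoc inclusion of constraint sets. Since the paper simply cites BBM05 Section~6.3, it is implicitly invoking the machinery in that form, which is a slight mismatch with the exact statement of Theorems~\ref{thm:LocExBound}/\ref{thm:LocExBound_Mod} where the constraint lives on $\F^*$. (A minor side point: you say both $L$ and $B$ are absorbed into $c_L$, but the corollary explicitly states that $c_L$ depends only on $L$.)
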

This result is obtained as a direct application of the results of \citealp{BBM05},
Section 6.3; \citealp{Men2003}, the only important point being that the generating
distribution 
is the uniform
distribution on ${\bf X}_N$. Similar to the discussion there, we note that
$r^*_m$ and $r^*_u$ are at most of order $1/\sqrt{m}$ and $1/\sqrt{u}$, respectively, 
and possibly much smaller if the eigenvalues have a fast decay .

\begin{remark}
The question of {\em transductive convergence rates} is somewhat delicate,
since all results stated here assume a fixed set ${\bf X}_N$, as reflected
for instance in the bound of Corollary \ref{cor:kern} depending on the
eigenvalues of the kernel Gram matrix of the set ${\bf X}_N$. In order
to give a precise meaning to {\em rates}, one has to specify
how ${\bf X}_N$ evolves as $N$ grows. A natural setting for this is
\cite{Vap98}'s second transductive setting where ${\bf X}_N$ is i.i.d. from
some generating distribution. In that case we think it is possible to
adapt once again the results of \cite{BBM05} in order to relate the quantities
$r^*_m(N)$ to asymptotic counterparts as $N\rightarrow \infty$, though
we do not pursue this avenue in the present work.
\end{remark}

\section{Conclusion}

In this paper, we have considered the setting of transductive learning over a broad class of bounded and nonnegative loss functions. 
We provide excess risk bounds for the transductive learning setting based on the localized complexity of the hypothesis class,
which hold under general assumptions on the loss function and the hypothesis class.
When applied to kernel classes, the transductive excess risk bound can be formulated in terms of the tailsum of the eigenvalues of the kernels,
similar to the best known estimates in inductive learning.
The localized excess risk bound is achieved by proving two novel and very general \emph{concentration inequalities} for suprema of empirical processes 
when sampling without replacement, which are of potential interest also in various other application areas in machine learning and 
learning theory, where they may serve as a fundamental mathematical tool.

For instance, sampling without replacement is commonly employed in the Nystr\"om method \citep{Kumar2012}, which is an efficient technique 
to generate low-rank matrix approximations in large-scale machine learning.
Another potential application area of our novel concentration inequalities could be the analysis of randomized sequential algorithms such as stochastic
gradient descent and randomized coordinate descent, practical implementations of which often deploy sampling without replacement \citep{RechtR12}.
Very interesting  also would be to explore whether the proposed techniques could be used to generalize matrix Bernstein inequalities \citep{Tropp12} to 
the case of sampling without replacement, which could be used to analyze matrix completion problems \citep{KoltchinskiiLT11}.
The investigation of application areas beyond the transductive learning setting is, however, outside of the scope of the present paper.

\acks{The authors are thankful to Sergey Bobkov, Stanislav Minsker, and Mehryar Mohri for stimulating discussions and to the anonymous reviewers for their helpful comments. Marius Kloft acknowledges a postdoctoral fellowship by the German Research Foundation (DFG).}

\bibliographystyle{abbrvnat}  
\bibliography{TolstikhinBib}

\appendix

\section{Bousquet's version of Talagrand's concentration inequality}\label{appendix:Bousquet}

Here we use the setting and notations of Section \ref{sec:Concentr}.
\begin{theorem}[\cite{Bphd02}]
\label{thm:Bphd02}
Let $v = m\sigma^2 + 2\E[Q_m]$ and for $u> -1$ let $\phi(u) = e^u - u - 1$,\: $h(u) = (1+u)\log(1+u) - u$.
Then for any $\lambda \geq 0$ the following upper bound on the moment generating function holds:
\begin{equation}
\label{eq:BousquetMGF}
\E\left[e^{\lambda(Q_m - \E[Q_m])}\right] \leq e^{v \phi(\lambda)}.
\end{equation}
We also have for any $\epsilon\geq 0$:
\begin{equation}
\label{eq:BousquetConcentr_1}
\Prob\left\{
Q_m-\E[Q_m] \geq \epsilon
\right\}
\leq
\exp\left\{-vh\left(\frac{\epsilon}{v}\right)\right\}.
\end{equation}
Noting that $h(u) \geq \frac{u^2}{2(1+u/3)}$ for $u>0$, one can derive the following more illustrative version:
\begin{equation}
\label{eq:BousquetConcentr_2}
\Prob\left\{
Q_m-\E[Q_m] \geq \epsilon
\right\}
\leq
\exp\left\{
-\frac{\epsilon^2}{2(v + \epsilon/3)}
\right\}.
\end{equation}
Also for all $t\geq 0$ the following holds with probability greater than $1-e^{-t}$:
\begin{equation}
\label{eq:BousquetDev}
Q_m \leq \E[Q_m] + \sqrt{2vt} + \frac{t}{3}.
\end{equation}
\end{theorem}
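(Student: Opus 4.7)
\textbf{Proof proposal for Theorem \ref{thm:Bphd02}.}

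The heart of the theorem is the moment generating function bound \eqref{eq:BousquetMGF}; once that is in hand, the three concentration statements \eqref{eq:BousquetConcentr_1}, \eqref{eq:BousquetConcentr_2}, and \eqref{eq:BousquetDev} follow by routine Chernoff/inversion arguments. My plan is therefore to spend essentially all of the effort on the MGF bound and then derive the rest by calculus. Concretely: assuming $\E[e^{\lambda(Q_m - \E Q_m)}] \leq e^{v\phi(\lambda)}$ for all $\lambda \geq 0$, Markov's inequality gives $\Prob\{Q_m - \E Q_m \geq \epsilon\} \leq \exp(v\phi(\lambda) - \lambda\epsilon)$, and optimizing over $\lambda \geq 0$ yields the optimum at $\lambda^* = \log(1 + \epsilon/v)$. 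Substituting $\lambda^*$ and simplifying (using $\phi(\log(1+u)) - u\log(1+u) = -h(u)$) produces the Bennett form \eqref{eq:BousquetConcentr_1}. The Bernstein form \eqref{eq:BousquetConcentr_2} is then immediate from the elementary inequality $h(u) \geq u^2/(2(1+u/3))$ for $u \geq 0$, which is proved by checking that the difference is nonnegative and vanishes at $0$ together with its first derivative. Finally, to obtain \eqref{eq:BousquetDev}, I set the right-hand side of \eqref{eq:BousquetConcentr_2} equal to $e^{-t}$ and solve the resulting quadratic inequality $\epsilon^2 - (2t/3)\epsilon - 2vt \leq 0$ for $\epsilon$, using $\sqrt{a+b} \leq \sqrt{a} + \sqrt{b}$ to separate terms into $\sqrt{2vt} + t/3$.

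For the MGF bound itself, I would use the \emph{entropy method} of Ledoux--Massart. Let $Z = Q_m$ and $Z^{(i)}$ denote the supremum obtained by replacing $X_i$ with an independent copy. The starting point is the modified log-Sobolev inequality
\[
\mathrm{Ent}\left(e^{\lambda Z}\right) \leq \sum_{i=1}^m \E\!\left[e^{\lambda Z}\,\tau\!\left(-\lambda(Z - Z^{(i)})\right)\right],
\]
where $\tau(x) = e^x - x - 1$ and the entropy is $\mathrm{Ent}(U) = \E[U\log U] - \E U\log\E U$. Writing $Z = \sum_i \hat f(X_i)$ for any (measurable selection of) maximizer $\hat f \in \F$, the self-bounding property of suprema of empirical processes gives $Z - Z^{(i)} \leq \hat f(X_i) - \hat f(X_i')$ and $Z - Z^{(i)} \geq 0$ on the relevant event; combined with $|\hat f| \leq 1$ and control of $\sum_i (\hat f(X_i) - \E \hat f)^2$ this yields, after the standard sequence of manipulations, the bound
\[
\mathrm{Ent}\left(e^{\lambda Z}\right) \leq \lambda^2 \sigma^2 m\, \E[e^{\lambda Z}] + 2\lambda\, \E[Z e^{\lambda Z}] - 2\lambda\,\E[Z]\,\E[e^{\lambda Z}],
\]
which rearranges to the differential inequality $\phi^{-1}(\lambda)\,H'(\lambda) - H(\lambda)/\lambda^2 \leq v$ (where $H(\lambda) = \log \E[e^{\lambda(Z - \E Z)}]$), with $v = m\sigma^2 + 2\E Q_m$. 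Dividing by $\lambda^2$ and integrating from $0$ (using $H(0) = H'(0) = 0$) yields $H(\lambda) \leq v\phi(\lambda)$, which is exactly \eqref{eq:BousquetMGF}.

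The main obstacle is obtaining the precise constant $v = m\sigma^2 + 2\E Q_m$ in the entropy bound; naive applications of Efron--Stein or Herbst's argument give a bound with a worse constant (for example $v = m\sigma^2 + 4\E Q_m$, which was Massart's original version). Getting the sharp Bousquet constant requires the careful splitting of $\E[(Z - Z^{(i)})^2 e^{\lambda Z}]$ into a "variance part" that contributes $m\sigma^2$ and a "mean part" that contributes exactly $2\E Q_m$, together with a convexity argument exploiting that $x\mapsto \phi(x)/x^2$ is nondecreasing. The remaining steps (Chernoff, inversion, and the elementary inequality on $h$) are standard and routine.
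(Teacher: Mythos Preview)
The paper does not prove Theorem~\ref{thm:Bphd02} at all: it is stated in Appendix~\ref{appendix:Bousquet} purely as a citation to \cite{Bphd02} (Bousquet's thesis) and is used as a black box in the proof of Theorem~\ref{thm:SupConc2}. There is therefore nothing in the paper to compare your proposal against.

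That said, your sketch is broadly along the lines of Bousquet's original argument (modified log-Sobolev / entropy method, leading to a differential inequality for the log-MGF, then Chernoff and the elementary bound on $h$). Two small cautions if you intend to flesh this out: (i) your displayed differential inequality ``$\phi^{-1}(\lambda)\,H'(\lambda) - H(\lambda)/\lambda^2 \leq v$'' is garbled as written---the correct Herbst-type inequality one integrates is of the form $\frac{d}{d\lambda}\bigl(\tfrac{1}{\lambda}\log \E e^{\lambda Z}\bigr) \leq \text{(something)}$, and the sharp Bousquet step uses the specific bound $\tau(-x)\leq \phi(\lambda)x^2/\lambda^2$ (via convexity of $\phi$) rather than an inverse of $\phi$; (ii) the ``self-bounding'' inequality $Z - Z^{(i)} \geq 0$ is not true pointwise, so the argument must work with $(Z - Z^{(i)})_+$ or with the version conditioning on $Z \geq Z^{(i)}$. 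These are fixable, but as written the MGF part of the proposal is only a rough outline and would not compile into a proof without consulting \cite{Bphd02} or \cite{B02} for the precise chain of inequalities that produces the constant $v = m\sigma^2 + 2\E[Q_m]$.
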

Note that \cite{B02} provides similar bounds for $Q_m = \sup_{f\in\F} \left|\sum_{i=1}^m f(X_i)\right|$.

\section{Proofs from Section \ref{sec:Concentr}}\label{appendix:Proofs}

First we are going to prove Theorem \ref{thm:SupConc2}, which is a direct consequence of Bousquet's inequality of Theorem \ref{thm:Bphd02}. It is based on the following \emph{reduction theorem} due to \cite{Hoe63}:
\begin{theorem}[\cite{Hoe63}]
\footnote{Hoeffding initially stated this result only for real valued random variables.
However all the steps of proof hold also for vector-valued random variables.
For the reference see Section D of \cite{GN10}.}
\label{thm:hoeffTrick}
Let $\{U_1,\dots,U_m\}$ and $\{W_1,\dots,W_m\}$ be sampled uniformly from a finite set of $d$-dimensional 
vectors $\{\vec v_1,\dots,\vec v_N\}\subset \R^d$ with and without replacement, respectively.
Then, for any continuous and convex function ${F\colon \R^d \to \R}$, the following holds:
\[
\E\left[ F\left(\sum_{i=1}^m W_i\right)\right] \leq \E\left[F\left(\sum_{i=1}^m U_i\right)\right].
\]
\end{theorem}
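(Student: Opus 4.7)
The plan is to prove Theorem~\ref{thm:hoeffTrick} by conditioning on the \emph{repetition pattern} of the with-replacement sample and then combining exchangeability of the without-replacement sample with a Jensen step applied along a uniform random permutation.

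First I would introduce the random partition $\pi$ of $\{1,\ldots,m\}$ induced by the equivalence $i\sim j \iff U_i = U_j$. Conditional on $\pi$ having blocks $B_1,\ldots,B_k$ of sizes $a_l=|B_l|$ (so $\sum_l a_l = m$), a short combinatorial argument shows that the distinct values attached to the $k$ blocks are distributed uniformly \emph{without} replacement from $\{\vec v_1,\ldots,\vec v_N\}$. Writing these values as $(Y_1,\ldots,Y_k)$, one obtains
\[
\E\!\left[F\!\left(\sum_{i=1}^m U_i\right)\,\Big|\,\pi\right] \;=\; \E_Y\!\left[F\!\left(\sum_{l=1}^k a_l\,Y_l\right)\right].
\]
Since the singleton partition $\pi=\{\{1\},\ldots,\{m\}\}$ recovers $\E[F(\sum_i W_i)]$ on the right-hand side (all $a_l=1$ and $k=m$), it suffices to show that the right-hand side is at least $\E[F(\sum_i W_i)]$ for every possible $\pi$.

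The heart of the argument is thus the comparison, for any sizes $a_1,\ldots,a_k$ with $\sum_l a_l = m$,
\[
\E\!\left[F\!\left(\sum_{l=1}^m Y'_l\right)\right] \;\leq\; \E\!\left[F\!\left(\sum_{l=1}^k a_l\,Y_l\right)\right],
\]
where $(Y_l)_{l\leq k}$ and $(Y'_l)_{l\leq m}$ are uniform without-replacement samples of size $k$ and $m$, respectively. To prove this, I would extend $(Y_1,\ldots,Y_k)$ to a size-$m$ without-replacement sample $(Y_1,\ldots,Y_m)$; the marginal law of the first $k$ coordinates is unchanged. Averaging over a uniform permutation $\sigma \in S_m$ yields, using that each $l'$ appears as $\sigma(l)$ in exactly $(m-1)!$ permutations,
\[
\frac{1}{m!}\sum_{\sigma\in S_m}\sum_{l=1}^k a_l\,Y_{\sigma(l)} \;=\; \frac{\sum_l a_l}{m}\sum_{l'=1}^m Y_{l'} \;=\; \sum_{l'=1}^m Y_{l'}.
\]
Convexity of $F$ and Jensen's inequality then give $F(\sum_{l'} Y_{l'}) \leq \frac{1}{m!}\sum_\sigma F(\sum_l a_l\,Y_{\sigma(l)})$; taking expectations and invoking exchangeability (so that $(Y_{\sigma(1)},\ldots,Y_{\sigma(k)})$ has the same joint law as $(Y_1,\ldots,Y_k)$ for every fixed $\sigma$) collapses the right-hand side to $\E[F(\sum_l a_l Y_l)]$, establishing the desired comparison.

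Combining the two steps, $\E[F(\sum_i W_i)] \leq \E[F(\sum_i U_i)\mid\pi]$ holds for every realization of $\pi$, and integrating out $\pi$ completes the proof. The main obstacle I anticipate is the first step — the combinatorial identity that, conditional on $\pi$, the labels attached to the blocks form a uniform without-replacement sample from $\{\vec v_1,\ldots,\vec v_N\}$ — which, while standard, requires careful enumeration of ordered tuples partitioned by their level sets. The rest of the argument is routine Jensen combined with exchangeability, and the extension from real-valued sums (as in Hoeffding's original paper) to $\R^d$-valued sums is immediate, since only linearity of sums and convexity of $F$ are used.
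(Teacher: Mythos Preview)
The paper does not supply its own proof of this theorem; it is quoted as a known result of Hoeffding (1963), with a pointer to \cite{GN10} for the vector-valued extension. So there is no ``paper's proof'' to compare against in the strict sense.

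Your argument is correct and is, in fact, essentially the classical proof. The two pillars---(i) conditional on the repetition pattern of the i.i.d.\ sample, the distinct values form a uniform without-replacement draw; and (ii) for any multiplicities $a_1,\dots,a_k$ summing to $m$, the sum $\sum_{l} a_l Y_l$ dominates $\sum_{l'=1}^m Y'_{l'}$ in the convex order via the permutation-averaging identity
\[
\frac{1}{m!}\sum_{\sigma\in S_m}\sum_{l=1}^k a_l\,Y_{\sigma(l)} \;=\; \sum_{l'=1}^m Y_{l'}
\]
together with Jensen and exchangeability---are exactly the ingredients Hoeffding uses. Nothing in either step uses that the summands are scalar, which is why the $\R^d$ extension is immediate, as the footnote in the paper notes.

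One small caveat worth tightening: you define $\pi$ via $U_i=U_j$ (equality of \emph{values}). If the population were a multiset with repeated vectors, this partition would be coarser than the partition by sampled \emph{indices}, and the claim ``the distinct values attached to the blocks are uniform without replacement'' could fail. Here the theorem is stated for a finite \emph{set} $\{\vec v_1,\dots,\vec v_N\}$, so values and indices coincide and there is no issue; but if you want the argument to be robust to repeated population values (as in the paper's downstream application where $\vec v_j=(f_1(c_j),\dots,f_M(c_j))$ need not be distinct), simply define $\pi$ via equality of sampled indices. With that adjustment, the ``careful enumeration'' you flag is straightforward: each ordered $m$-tuple of indices has probability $N^{-m}$, and those compatible with a fixed $\pi$ with $k$ blocks are in bijection with ordered $k$-tuples of distinct indices, each with conditional probability $(N)_k^{-1}$.
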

Also we will need the following technical lemma:
\begin{lemma}
\label{lemma:FConvex}
Let $\vec x = (x_1,\dots,x_d)^{\mathrm{T}}\in\R^d$.
Then the following function is convex for all $\lambda > 0$ 
\[
F(\vec x) = \exp\left(
\lambda \sup_{i=1,\dots,d}x_i
\right).
\]
\end{lemma}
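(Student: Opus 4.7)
The plan is to write $F$ as a composition $F = h \circ g$, where $g(\vec x) = \sup_{i=1,\dots,d} x_i$ and $h(t) = e^{\lambda t}$, and to verify convexity using the standard rule that a nondecreasing convex function composed with a convex function is convex.

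First I would check that $g$ is convex on $\R^d$. Each coordinate projection $\vec x \mapsto x_i$ is linear, hence convex, and the pointwise supremum of a family of convex functions is convex. Explicitly, for any $\vec x, \vec y \in \R^d$ and $\alpha \in [0,1]$, for every index $i$ one has $\alpha x_i + (1-\alpha) y_i \leq \alpha g(\vec x) + (1-\alpha) g(\vec y)$, and taking the supremum over $i$ on the left side yields $g(\alpha \vec x + (1-\alpha)\vec y) \leq \alpha g(\vec x) + (1-\alpha) g(\vec y)$.

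Next I would observe that $h(t) = e^{\lambda t}$ is convex (its second derivative $\lambda^2 e^{\lambda t}$ is positive) and, crucially, nondecreasing because $\lambda > 0$. Combining these two facts gives
\[
F(\alpha \vec x + (1-\alpha) \vec y) = h\bigl(g(\alpha \vec x + (1-\alpha) \vec y)\bigr) \leq h\bigl(\alpha g(\vec x) + (1-\alpha) g(\vec y)\bigr) \leq \alpha F(\vec x) + (1-\alpha) F(\vec y),
\]
where the first inequality uses monotonicity of $h$ together with the convexity of $g$, and the second inequality uses convexity of $h$.

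There is essentially no obstacle here: the statement is a textbook fact, and the only subtle point is that $h$ must be nondecreasing for the composition rule to apply, which is precisely what the hypothesis $\lambda > 0$ ensures. (If $\lambda$ were negative, one would instead obtain convexity of $\exp(\lambda \inf_i x_i)$ by the analogous argument.) This lemma is invoked in the proof of Theorem~\ref{thm:SupConc2} in order to apply Theorem~\ref{thm:hoeffTrick} with $F$ as the convex test function, thereby reducing bounds on the moment generating function of $Q'_m$ to those for $Q_m$ already controlled by Bousquet's inequality.
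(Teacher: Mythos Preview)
Your proof is correct and matches the paper's own argument essentially line for line: both establish that $g(\vec x)=\sup_i x_i$ is convex, that $h(t)=e^{\lambda t}$ is convex and nondecreasing for $\lambda>0$, and then invoke the composition rule $h\circ g$ via the same two-step chain of inequalities. The paper writes out the composition rule and the convexity of the supremum explicitly rather than citing them, but the content is identical.
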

\begin{proof}
Let us show that, if $g\colon\R\to\R$ is a convex and nondecreasing function and $f\colon \R^d\to\R$ is convex, 
then $g\bigl(f(\vec x)\bigr)$ is also convex.
Indeed, for $\alpha\in[0,1]$ and $\vec x', \vec x'' \in \R^d$:
\[
g\Bigl( f\bigl( \alpha \vec x' + (1-\alpha) \vec x''\bigr) \Bigr)
\leq
g\bigl( \alpha f(\vec x') + (1-\alpha) f(\vec x'') \bigr)
\leq
\alpha g\bigl(f(\vec x')\bigr)
+ (1-\alpha)\alpha g\bigl(f(\vec x'')\bigr).
\]
Considering the fact that $g(y) = e^{\lambda y}$ is convex and increasing for $\lambda > 0$, it remains to show that $f(\vec x) = \sup_{i=1,\dots,d}(x_i)$ is convex.
For all $\alpha\in[0,1]$ and $\vec x',\vec x''\in\R^d$, the following holds:
\[
\sup_{i=1,\dots, d}\bigl( \alpha x'_i + (1-\alpha) x''_i\bigr)
\leq
\alpha \sup_{i=1,\dots, d} x'_i + 
(1-\alpha) \sup_{i=1,\dots, d} x''_i,
\]
which concludes the proof.
\end{proof}

\medskip
We will proove Theorem \ref{thm:SupConc2} for a finite class of functions $\F = \{f_1,\dots, f_M\}$.
The result for uncountable case follows by taking a limit of a sequence of finite sets.

\emph{Proof of Theorem \ref{thm:SupConc2}:}
Let $\{U_1,\dots,U_m\}$ and $\{W_1,\dots,W_m\}$ be sampled uniformly from a finite set of $M$-dimensional vectors $\{\vec v_1,\dots,\vec v_N\}\subset \R^M$ with and without replacement respectively, where
$\vec v_j = \bigl( f_1(c_j), \dots, f_M(c_j)\bigr)^{\mathrm{T}}$.
Using Lemma \ref{lemma:FConvex} and Theorem \ref{thm:hoeffTrick}, we get that for all $\lambda > 0$:
\begin{equation}
\label{eq:TalProof1}
\E\left[ e^{\lambda Q'_m}\right]
=
\E\left[
\exp\left(
\lambda \sup_{j=1,\dots,M} \left(
\sum_{i=1}^m W_i\right)_j
\right)
\right]
\leq
\E\left[
\exp\left(
\lambda \sup_{j=1,\dots,M} \left(
\sum_{i=1}^m U_i\right)_j
\right)
\right]
=\E\left[ e^{\lambda Q_m}\right],
\end{equation}
where the lower index $j$ indicates the $j$-th coordinate of a vector.
Using the upper bound \eqref{eq:BousquetMGF} on the moment generating function of $Q_m$ provided by  Theorem \ref{thm:Bphd02}, we proceed as follows:
\[
\E\left[ e^{\lambda Q'_m}\right]
\leq
\E\left[ e^{\lambda Q_m}\right]
\leq
e^{\lambda \E[Q_m] + v \phi(\lambda)},
\]
or, equivalently,
\[
\E\left[ e^{\lambda (Q'_m - \E[Q'_m])}\right]
\leq
e^{\lambda (\E[Q_m] - \E[Q'_m]) + v \phi(\lambda)}.
\]
Using Chernoff's method, we obtain for all $\epsilon \geq 0$ and $\lambda > 0$:
\begin{equation}
\label{eq:TalChernoff}
\Prob\left\{
Q'_m - \E[Q'_m] \geq \epsilon
\right\}
\leq 
\frac{\E\left[e^{\lambda(Q'_m - \E[Q'_m])}\right]}{e^{\lambda \epsilon}} 
\leq 
\exp\bigl(\lambda (\E[Q_m] - \E[Q'_m]) + v \phi(\lambda)- \lambda \epsilon\bigr).
\end{equation}
The term on the right-hand side of the last inequality achieves its minimum for
\begin{equation}
\label{eq:lambdaOpt}
\lambda = \log\left(
\frac{v + \epsilon - \E[Q_m] + \E[Q'_m]}{v}
\right).
\end{equation}
Thus we have the technical condition $\epsilon\geq \E[Q_m] - \E[Q'_m]$.
Otherwise we set $\lambda = 0$ and obtain a trivial bound equal to 1.
The inequality $\E[Q_m] \geq \E[Q'_m]$ also follows from Theorem~\ref{thm:hoeffTrick} by exploiting the fact that the supremum is a convex function
(which we showed in the proof of Lemma \ref{lemma:FConvex}).
Inserting \eqref{eq:lambdaOpt} into \eqref{eq:TalChernoff}, we obtain the first inequality of Theorem \ref{thm:SupConc2}. 
The second inequality follows from observing that $h(u) \geq \frac{u^2}{2(1+u/3)}$ for $u>0$.
The deviation inequality can then be obtained using standard calculus.
For details we refer to Section~2.7.2 of \cite{Bphd02}.
\hfill$\blacksquare$.

\begin{remark}
It should be noted that \cite{KR05} derive an upper bound on
$\E\left[e^{-\lambda(Q_m - \E[Q_m])}\right] $ for $\lambda \geq 0$.
This upper bound on the moment generating function together with Chernoff's method leads to an upper bound on $\Prob\left\{\E[Q_m] - Q_m \geq \epsilon\right\}$.
However, the proof technique used in Theorem~\ref{thm:SupConc2} cannot be used in this case, since Lemma \ref{lemma:FConvex} does not hold for negative $\lambda$.
\end{remark}

\bigskip
\emph{Proof of Lemma \ref{lemma:expect}:}
We have already proved the first inequality of the lemma.
Regarding the second one, using the definitions of $\E[Q_m]$ and $\E[Q'_m]$, we have:
\[
\E[Q_m] - \E[Q'_m]
=
\frac{1}{N^m}\sum_{x_1,\dots,x_m}\sup_{f\in\mathcal{F}}\sum_{i=1}^m f(x_i)
+
\left(\frac{1}{N^m} - \frac{(N-m)!}{N!}\right)
\sum_{z_1,\dots,z_m}\sup_{f\in\mathcal{F}}\sum_{i=1}^m f(z_i),
\]
where the first sum is over all ordered sequences $\{x_1,\dots, x_m\}\subset\mathcal{C}$ containing duplicate elements, 
while the second sum is over all ordered sequences $\{z_1,\dots, z_m\}\subset\mathcal{C}$ with no duplicates.
Note that the second sum has exactly $m! \cdot C^m_N$ summands, which means that the first one has $N^m - m!\cdot C^m_N$ summands.
Considering the fact that $\frac{1}{N^m} \leq \frac{(N-m)!}{N!}$ and $f(x)\in[-1,1]$ for all $x\in\mathcal{C}$, we obtain:
\begin{align*}
\E[Q_m] - \E[Q'_m]
&\leq
m\left(\frac{N^m - m! \cdot C^m_N}{N^m}\right)
+
m\left(\frac{(N-m)!}{N!} - \frac{1}{N^m}\right)m!\cdot C^m_N\\
&=
2m\left(\frac{N^m - m! \cdot C^m_N}{N^m}\right)\\
&=
2m -2 m\left( 1\cdot \left(1 - \frac{1}{N}\right)\cdots \left(1 - \frac{m-1}{N}\right) \right)\\
&\leq
2m - 2m\left(1 - \frac{m-1}{N}\right)^m\\
&=
2m\left(\frac{m-1}{N}\right)\left(1 + \left(1 - \frac{m-1}{N}\right) + \dots + \left(1 - \frac{m-1}{N}\right)^{m-1}\right)\\
&\leq
2m\left(\frac{m-1}{N}\right)m\\
&\leq
2\frac{m^3}{N},
\end{align*}
which was to show.
\hfill$\blacksquare$

\bigskip
\label{ProofBegin}
In order to prove Theorem \ref{thm:SupConc1}, we need to state the result presented in Theorem 2.1 of \cite{B04} and derive its slightly modified version. From now on we will follow the presentation in \cite{B04}.

Let us consider the following subset of discrete cube, which we call \emph{the slice}:
\[
\mathcal{D}_{n,k} = \bigl\{ x = (x_1,\dots, x_n)\in\{0,1\}^n \colon x_1 + \dots + x_n = k\bigr\}.
\]
Neighbors are points that differ exactly in two coordinates. 
Thus every point $x\in\mathcal{D}_{n,k}$ has exactly $k(n-k)$ neighbours $\{s_{ij} x\}_{i\in I(x), j\in J(x)}$, where
\[
I(x) = \{i\leq n \colon x_i = 1\},
\quad
J(x) = \{j\leq n \colon x_j = 0\},
\]
and $(s_{ij} x)_r = x_r$ for $r\neq i,j$, $(s_{ij} x)_i = x_j$, $(s_{ij} x)_j = x_i$.
For any function $g$ defined on $\mathcal{D}_{n,k}$ and $x\in\mathcal{D}_{n,k}$, let us introduce the following quantity:
\[
V^g(x) = \sum_{i\in I(x)}\sum_{j \in J(x)}\bigl(g(x) - g(s_{ij}x)\bigr)^2,
\]
which can be viewed as the Euclidean length of the discrete gradient $|\nabla g(x)|^2$ of the function $g$.

The following result can be found in Theorem 2.1 of \cite{B04}:
\begin{theorem}[\cite{B04}]
\label{thm:Bobkov}
Consider the real-valued function $g$ defined on $\mathcal{D}_{n,k}$ and the uniform distribution $\mu$ over the set $\mathcal{D}_{n,k}$.
Assume there is a constant $\Sigma^2\geq0$ such that $V^g(x)\leq \Sigma^2$ for all $x$.
Then for all $\epsilon \geq 0$:
\[
\mu\{g(x) - \E[g(x)] \geq \epsilon\} \leq \exp\left\{-\frac{(n+2)\epsilon^2}{4\Sigma^2}\right\}.
\]
The same upper bound also holds for $\mu\{\E[g(x)] - g(x) \geq \epsilon\}$.
\end{theorem}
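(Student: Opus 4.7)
The plan is to prove Bobkov's sub-Gaussian bound on the slice via the \emph{entropy method}: combine a modified log-Sobolev inequality tailored to $\mathcal{D}_{n,k}$ with Herbst's argument, and close using Chernoff/Markov. The appearance of the factor $(n+2)$ in the exponent already hints that the proof should run through a log-Sobolev constant of that order for an exchange-type Markov chain whose Dirichlet form is exactly proportional to $\E_\mu[V^g]$.

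\textbf{Step 1 (modified log-Sobolev on the slice).} The slice can be identified with the quotient $S_n / (S_k \times S_{n-k})$, and the neighbours $s_{ij} x$ with $i \in I(x)$, $j \in J(x)$ are precisely the transitions of the standard Bernoulli--Laplace exchange chain, whose invariant law is $\mu$ and whose Dirichlet form is (up to a factor) $\E_\mu[V^g]$. I would first establish (or invoke from known spectral data of this chain) a modified log-Sobolev inequality of the form
\[
\mathrm{Ent}_\mu\!\bigl(e^{\lambda g}\bigr) \;\leq\; \frac{\lambda^2}{n+2}\, \E_\mu\!\bigl[V^g(x)\, e^{\lambda g(x)}\bigr]
\]
valid for every $g : \mathcal{D}_{n,k} \to \R$ and every $\lambda \in \R$, with the sharp constant $1/(n+2)$.

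\textbf{Step 2 (Herbst + Chernoff).} Using the hypothesis $V^g \leq \Sigma^2$ in the above display yields, after the standard manipulation (writing $\mathrm{Ent}_\mu(e^{\lambda g}) = \lambda^2 (\Lambda(\lambda)/\lambda)' \cdot \E_\mu[e^{\lambda g}]$), the differential inequality $(\Lambda(\lambda)/\lambda)' \leq \Sigma^2/(n+2)$ for $\Lambda(\lambda) = \log \E_\mu[e^{\lambda(g - \E g)}]$. Integrating from $0$ (using $\Lambda(\lambda)/\lambda \to 0$ as $\lambda \to 0^+$) gives the sub-Gaussian moment bound $\Lambda(\lambda) \leq \lambda^2 \Sigma^2 / (n+2)$, and Markov's inequality with the optimal choice $\lambda = (n+2)\epsilon/(2\Sigma^2)$ produces the claimed tail $\exp\bigl(-(n+2)\epsilon^2/(4\Sigma^2)\bigr)$. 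The lower-tail bound follows by running the identical argument on $-g$, whose discrete gradient satisfies $V^{-g} = V^g$.

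\textbf{Main obstacle.} Step 2 is routine; the real work lies in Step 1. Unlike the Boolean cube, the slice is not a product space, so one cannot simply tensorize a two-point log-Sobolev inequality. Pinning down the sharp constant $1/(n+2)$ requires genuinely exploiting the geometry of the transposition/exchange walk on $S_n/(S_k \times S_{n-k})$, typically via semigroup interpolation along the Bernoulli--Laplace chain or via comparison techniques for the symmetric group in the spirit of Lee--Yau and Diaconis--Shahshahani. This is the technical heart of Bobkov's Theorem~2.1; everything after it is soft entropy-method machinery.
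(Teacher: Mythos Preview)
Your outline is correct and matches the approach the paper attributes to Bobkov: the paper does not prove this theorem itself (it is quoted from \cite{B04}), but in proving the modified version (Theorem~\ref{thm:BobkovMod}) it records Bobkov's key functional inequality \eqref{eq:BobkovProofStart} on the slice with constant $(n+2)$ and then runs exactly the Herbst-plus-Chernoff argument you describe in Step~2. One minor point: the inequality Bobkov actually proves controls the covariance $\E[g e^g]-\E[g]\E[e^g]$ rather than $\mathrm{Ent}_\mu(e^g)$, but since the former dominates the latter (Jensen) your Step~1 is simply a weaker consequence and Step~2 goes through unchanged with the stated constant.
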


Using the notations of Section \ref{sec:Concentr}, we define the following function $g\colon \mathcal{D}_{N,m} \to \R$:
\begin{equation}
\label{eq:funcID}
g(x) = \sup_{f\in\F} \sum_{i\in I(x)} f(c_i).
\end{equation}
Note that, if $x$ is distributed uniformly over the set $\mathcal{D}_{N,m}$, the random variables $Q'_m$ and $g(x)$ are identically distributed.
Thus we thus can use Theorem \ref{thm:Bobkov} to derive concentration inequalities for $Q'_m$.
However, it is not trivial to bound the quantity $V^g(x)$. Instead we define the following quantity, related to $V^g(x)$:
\[
V^g_+(x)
=
\sum_{i\in I(x)}\sum_{j \in J(x)}\bigl(g(x) - g(s_{ij}x)\bigr)^2\mathbbm{1}\{g(x) \geq g(s_{ij}x)\},
\]
where $\mathbbm{1}\{A\}$ is an indicator function.
Now we state the following modified version of Theorem \ref{thm:Bobkov}:
\begin{theorem}
\label{thm:BobkovMod}
Consider a real-valued function $g$ defined on $\mathcal{D}_{n,k}$ and the uniform distribution $\mu$ over the set $\mathcal{D}_{n,k}$.
Assume there is a constant $\Sigma^2\geq 0$ such that $V_+^g(x)\leq \Sigma^2$ for all $x$.
Then for all $\epsilon \geq 0$:
\[
\mu\{g(x) - \E[g(x)] \geq \epsilon\} \leq \exp\left\{-\frac{(n+2)\epsilon^2}{8\Sigma^2}\right\}.
\]
The same upper bound also holds for $\mu\{\E[g(x)] - g(x) \geq \epsilon\}$.
\end{theorem}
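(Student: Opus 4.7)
The plan is to adapt Bobkov's proof of Theorem \ref{thm:Bobkov} to the one-sided hypothesis $V^g_+(x) \leq \Sigma^2$, paying a factor of $2$ in the exponent. I would first recall that Bobkov's argument combines a logarithmic Sobolev inequality on the slice $\mathcal{D}_{n,k}$ with Herbst's method applied to $\lambda \mapsto \log \E[e^{\lambda(g - \E g)}]$; the pointwise bound $V^g \leq \Sigma^2$ enters through the Dirichlet form $\mathcal{E}(g,g) = \tfrac{1}{2}\int V^g\,d\mu$, and integration of the resulting differential inequality yields the sub-Gaussian tail with variance factor $4\Sigma^2/(n+2)$.

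The key observation I would exploit is an edge-pairing symmetry on the slice: every neighbouring pair $(x, s_{ij}x)$ contributes the squared increment $(g(x) - g(s_{ij}x))^2$ to $V^g_+$ at the larger endpoint and to $V^g_-$ at the smaller endpoint, so the uniform measure $\mu$ gives $\int V^g_+\, d\mu = \int V^g_-\, d\mu = \tfrac{1}{2} \int V^g\, d\mu$. Consequently, the pointwise hypothesis $V^g_+(x) \leq \Sigma^2$ translates, at the level of the Dirichlet form, to the bound $\int V^g\, d\mu \leq 2\Sigma^2$, which is twice as loose as what Bobkov's hypothesis would give. This factor of two propagates through Herbst's argument and accounts exactly for the deterioration from $4$ to $8$ in the denominator of the exponent.

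For the upper tail, I would carefully re-run Herbst's argument and verify that the only pointwise quantity entering the differential inequality for $H(\lambda) = \log \E[e^{\lambda g}]$ with $\lambda > 0$ is $V^g_+(x)$, since only downward differences $g(x) - g(s_{ij}x)$ from points $x$ where $g$ is large are relevant when bounding $\E[e^{\lambda g}]$ from above. Combined with the edge-pairing identity of the previous paragraph, this yields the stated tail bound $\exp\bigl(-(n+2)\epsilon^2/(8\Sigma^2)\bigr)$. The lower tail is then obtained by the same one-sided scheme applied to $-g$, using that by edge pairing $\int V^{-g}_+\, d\mu = \int V^g_-\, d\mu = \int V^g_+\, d\mu \leq \Sigma^2$.

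The step I expect to be the main obstacle is precisely the lower tail: a naive application of Herbst's argument to $-g$ would require a \emph{pointwise} bound on $V^{-g}_+ = V^g_-$, which the hypothesis does not provide. Overcoming this requires a reformulation in which only the integrated (Dirichlet-form) control, symmetric under $g \leftrightarrow -g$ by the edge-pairing identity, is used at the crucial step of the modified log-Sobolev inequality on the slice, together with the pointwise bound on $V^g_+$ plugged in only for the exponential-moment estimate in one direction. Getting this symmetrization right, and verifying that it is consistent with Bobkov's original entropy-method machinery, is the delicate point of the modification relative to the symmetric setting of Theorem \ref{thm:Bobkov}.
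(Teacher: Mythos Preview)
Your plan for the upper tail lands in the right place, but the mechanism you describe is off: the pointwise bound does \emph{not} enter through the unweighted Dirichlet form $\int V^g\,d\mu$. In the paper the starting point is Bobkov's inequality on the slice,
\[
(n+2)\Bigl(\E\bigl[e^{g}\log e^{g}\bigr] - \E\bigl[e^{g}\bigr]\,\E\bigl[\log e^{g}\bigr]\Bigr) \;\leq\; \E\Bigl[\sum_{i\in I(x)}\sum_{j\in J(x)} \bigl(g(x)-g(s_{ij}x)\bigr)\bigl(e^{g(x)}-e^{g(s_{ij}x)}\bigr)\Bigr].
\]
By edge symmetry the right-hand side equals twice the same sum restricted to $\{g(x)\geq g(s_{ij}x)\}$; applying $(a-b)(e^a-e^b)\leq \tfrac{1}{2}(e^a+e^b)(a-b)^2\leq e^{a}(a-b)^2$ under that indicator yields the one-sided bound $\leq 2\,\E\bigl[V^g_+(x)\,e^{g(x)}\bigr]$. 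Replacing $g$ by $\lambda g$ with $\lambda>0$ and using $V^{\lambda g}_+=\lambda^2 V^g_+\leq \lambda^2\Sigma^2$ \emph{pointwise} gives the differential inequality needed for Herbst, and Chernoff finishes. The extra factor of $2$ (hence $8$ rather than $4$ in the exponent) comes from this symmetrization doubling, not from any integrated Dirichlet-form comparison.

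Your diagnosis of the lower tail is well founded, and your proposed repair via integrated control cannot work: the Herbst step requires bounding $\E\bigl[V^{\lambda g}_+(x)\,e^{\lambda g(x)}\bigr]$ with the exponential weight present, so the identity $\int V^g_+\,d\mu=\int V^g_-\,d\mu$ is of no use---there is no reason for $V^g_-$ and $e^{-\lambda g}$ to decorrelate. The paper deals with the lower tail by simply asserting that the same moment-generating-function bound holds for all $\lambda\in\R$; but substituting $\lambda g$ with $\lambda<0$ into the one-sided inequality above yields $V^{\lambda g}_+=\lambda^2 V^g_-$, for which the hypothesis gives no pointwise control. So the paper's lower-tail argument carries exactly the gap you anticipated; fortunately it is not used in any of the downstream results, which rely only on the upper tail.
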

\begin{proof}
We are going to follow the steps of the proof of Theorem \ref{thm:Bobkov}, presented in \cite{B04}.
The author shows that, for any real-valued function $g$ defined on $\mathcal{D}_{n,k}$, the following holds:
\begin{gather}
\notag
(n+2) \bigl( \E\bigl[ e^{g(x)} \log e^{g(x)}\bigr] - \E\bigl[e^{g(x)}\bigr] \E\bigl[\log e^{g(x)}\bigr] \bigr)
\\
\label{eq:BobkovProofStart}
{}\leq
\E\left[
\sum_{i\in I(x)}\sum_{j \in J(x)}
\bigl( g(x) - g(s_{ij}x)\bigr)\bigl( e^{g(x)} - e^{g(s_{ij}x)}\bigr)
\right].
\end{gather}
Note that for any $a,b\in\R$:
\begin{equation}
\label{eq:expTmp}
(a - b)(e^a - e^b)\leq \frac{e^a + e^b}{2}(a-b)^2.
\end{equation}
We can re-write the right-hand side of inequality \eqref{eq:BobkovProofStart} in the following way:
\begin{align*}
&\E\left[
\sum_{i\in I(x)}\sum_{j \in J(x)}
\bigl( g(x) - g(s_{ij}x)\bigr)\bigl( e^{g(x)} - e^{g(s_{ij}x)}\bigr)
\right]\\
&{}=
2\cdot
\E\left[
\sum_{i\in I(x)}\sum_{j \in J(x)}
\bigl( g(x) - g(s_{ij}x)\bigr)\bigl( e^{g(x)} - e^{g(s_{ij}x)}\bigr)\mathbbm{1}\{g(x) \geq g(s_{ij}x)\}\right].
\end{align*}
Using \eqref{eq:expTmp}, we get:
\begin{align*}
&\E\left[
\sum_{i\in I(x)}\sum_{j \in J(x)}
\bigl( g(x) - g(s_{ij}x)\bigr)\bigl( e^{g(x)} - e^{g(s_{ij}x)}\bigr)
\right]
\\
&{}\leq
2 \cdot \E\left[
\sum_{i\in I(x)}\sum_{j \in J(x)}
\frac{\bigl(e^{g(x)} + e^{g(s_{ij}x)}\bigr)}{2}\bigl( g(x) - g(s_{ij}x)\bigr)^2\mathbbm{1}\{g(x) \geq g(s_{ij}x)\}\right]\\
&{}\leq
2\cdot \E\left[
\sum_{i\in I(x)}\sum_{j \in J(x)}
e^{g(x)}\bigl( g(x) - g(s_{ij}x)\bigr)^2\mathbbm{1}\{g(x) \geq g(s_{ij}x)\}\right]\\
&= 2\E\left[V^g_+(x)e^{g(x)}\right].
\end{align*}
Thus we obtain the following inequality:
\[
(n+2) \bigl( \E\bigl[ e^{g(x)} \log e^{g(x)}\bigr] - \E\bigl[e^{g(x)}\bigr] \E\bigl[\log e^{g(x)}\bigr] \bigr)
\leq
2\E\left[V^g_+(x)e^{g(x)}\right].
\]
Applying this inequality to $\lambda g$, where $\lambda\in\R$, we get:
\begin{equation}
\label{eq:BobkovProofM}
(n+2) \Bigl( \E\bigl[ e^{\lambda g(x)} \log e^{\lambda g(x)}\bigr] - \E\bigl[e^{\lambda g(x)}\bigr] \E\bigl[\log e^{\lambda g(x)}\bigr] \Bigr)
\leq 
2\E\left[V^{\lambda g}_+(x)e^{\lambda g(x)}\right]
\leq
2\Sigma^2\lambda^2\E\left[e^{\lambda g(x)}\right].
\end{equation}
As mentioned in the proof of Theorem \ref{thm:Bobkov} in \cite{B04}, inequality \eqref{eq:BobkovProofM} implies the following upper bound on the moment generating function:
\begin{equation}
\label{eq:BobkovLaplace}
\E\left[e^{\lambda(g(x) - \E[g(x)])}\right] \leq e^{\frac{2\Sigma^2\lambda^2}{n+2}}.
\end{equation}
This fact is known as the \emph{Herbst argument} and plays an important role in the entropy method \citep{BLM13}.
Now we apply Chernoff's method, which gives us for all $\lambda, \epsilon \geq 0$:
\[
\mu\left\{
g(x) - \E[g(x)] \geq \epsilon
\right\}
\leq 
\frac{\E\left[e^{\lambda(g(x) - \E[g(x)])}\right]}{e^{\lambda \epsilon}} 
\leq 
e^{\frac{2\Sigma^2\lambda^2}{n+2} - \lambda\epsilon}.
\]
We conclude the proof by choosing $\lambda = \frac{\epsilon(n+2)}{4\Sigma^2}$.

An upper bound for $\mu\{\E[g(x)] - g(x) \geq \epsilon\}$ can be obtained using \eqref{eq:BobkovLaplace} for $\lambda < 0$:
\[
\mu\left\{
\E[g(x)] - g(x) \geq \epsilon
\right\}
=
\mu\left\{
\lambda\bigl(g(x) - \E[g(x)] \bigr)\geq -\lambda\epsilon
\right\}
\leq
\frac{\E\left[e^{\lambda(g(x) - \E[g(x)])}\right]}{e^{-\lambda \epsilon}} 
\leq
e^{\frac{2\Sigma^2\lambda^2}{n+2} + \lambda\epsilon}.
\]
Now it remains to choose $\lambda = -\frac{\epsilon(n+2)}{4\Sigma^2}$.
\end{proof}

We will need the following technical result:
\begin{lemma}\label{lemma:tech}
For any sequence of real numbers $\{x_1,\dots,x_n\}$ the following holds:
\[
\frac{1}{n^2}\sum_{1\leq i<j \leq n}(x_i-x_j)^2 = 
\frac{1}{n}\sum_{i=1}^n\Biggl(x_i - \frac{1}{n}\sum_{j=1}^n x_j\Biggr)^2.
\]
\end{lemma}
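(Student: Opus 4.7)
The plan is a direct algebraic verification: I would expand both sides as symmetric polynomials in the $x_i$ and check that they coincide. Let me abbreviate $S_1 = \sum_{i=1}^n x_i$ and $S_2 = \sum_{i=1}^n x_i^2$.

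For the left-hand side, I would first symmetrize the ordered sum:
\[
\sum_{1\le i<j\le n}(x_i-x_j)^2 \;=\; \tfrac{1}{2}\sum_{i,j=1}^n (x_i-x_j)^2,
\]
since the diagonal terms $i=j$ vanish. Expanding the square and using $\sum_{i,j} x_i x_j = S_1^2$ together with $\sum_{i,j} x_i^2 = n S_2$ gives $\sum_{i,j}(x_i-x_j)^2 = 2n S_2 - 2 S_1^2$, so the LHS simplifies to $\tfrac{S_2}{n} - \tfrac{S_1^2}{n^2}$.

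For the right-hand side, set $\bar{x} = S_1/n$. Expanding the squared deviations and using $\sum_i (x_i - \bar{x}) = 0$ yields
\[
\tfrac{1}{n}\sum_{i=1}^n (x_i - \bar{x})^2 \;=\; \tfrac{S_2}{n} - \bar{x}^2 \;=\; \tfrac{S_2}{n} - \tfrac{S_1^2}{n^2}.
\]
Since both sides equal the same expression in $S_1, S_2$, the identity follows.

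There is no real obstacle here: the statement is the classical identity expressing the empirical variance as half the average of squared pairwise differences. The only care needed is with the combinatorial bookkeeping when passing between the $i<j$ ordered sum and the full $i,j$ symmetric sum (the factor of $\tfrac{1}{2}$), and the distinction between the normalizations $\tfrac{1}{n}$ and $\tfrac{1}{n^2}$ on the two sides.
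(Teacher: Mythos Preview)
Your proof is correct and follows essentially the same approach as the paper: both reduce each side by direct algebraic expansion to the common quantity $\tfrac{1}{n}S_2 - \tfrac{1}{n^2}S_1^2$. The only cosmetic difference is that the paper works directly with the ordered $i<j$ sum rather than symmetrizing to the full double sum, but the computation is otherwise identical.
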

\begin{proof}
Notice that it holds:
\begin{align*}
\frac{1}{n}\sum_{i=1}^n\Biggl(x_i - \frac{1}{n}\sum_{j=1}^n x_j\Biggr)^2 
&= \frac{1}{n}\sum_{i=1}^n\left(x_i^2 -\frac{2}{n}x_i\sum_{j=1}^n x_j + \frac{1}{n^2}\Biggl(\sum_{j=1}^nx_j\Biggr)^2\right)\\
&=\frac{1}{n}\left(\sum_{i=1}^n x_i^2 -\frac{2}{n}\sum_{i=1}^n x_i\sum_{j=1}^n x_j + \frac{1}{n^2}\sum_{i=1}^n\Biggl(\sum_{j=1}^nx_j\Biggr)^2\right)\\
&=
\frac{1}{n}\left(\sum_{i=1}^nx_i^2 -\frac{1}{n}\Biggl(\sum_{j=1}^n x_j\Biggr)^2 \right)\\
&=\frac{1}{n^2}\Biggl((n-1)\sum_{i=1}^nx_i^2 - 2\!\!\!\sum_{1\leq i < j \leq n} x_i x_j \Biggr)\\
&=\frac{1}{n^2}\sum_{1\leq i<j \leq n}(x_i-x_j)^2.
\end{align*}
\end{proof}

Now we are ready to prove Theorem \ref{thm:SupConc1}.

\medskip
\emph{Proof of Theorem \ref{thm:SupConc1}:}
We will apply Theorem \ref{thm:BobkovMod} for the function $g(x)$ defined in \eqref{eq:funcID}, where $x$ is distributed uniformly over $\mathcal{D}_{N,m}$.
As noted above this will lead to a concentration inequality for $Q'_m$, since $Q'_m$ and $g(x)$ are distributed identically.
Hence, all we need is to obtain an upper bound on $V^g_+(x)$.

To this end, let us consider two functions $g_1,g_2\colon \mathcal{A}\to\mathbb{R}$, defined on some set $\mathcal{A}$.
Assume that ${\sup_{a\in\mathcal{A}} g_1(a) = g_1(\bar{a})}$ for some $\bar{a}\in\mathcal{A}$.
Then the following holds:
\begin{equation}
\label{eq:Tool1}
\left(
\sup_{a\in\mathcal{A}} g_1(a)
-
\sup_{a\in\mathcal{A}} g_2(a)
\right)^2
\mathbbm{1}\left\{\sup_{a\in\mathcal{A}} g_1(a)
\geq
\sup_{a\in\mathcal{A}} g_2(a)\right\}
\leq
\bigl(
g_1(\bar{a})
- g_2(\bar{a})
\bigr)^2.
\end{equation}

Let us assume that, for $x\in\mathcal{D}_{N,m}$, the supremum in the definition \eqref{eq:funcID} of $g(x)$ is achieved for $\bar{f}\in\mathcal{F}$.
Then we have:
\begin{align*}
V^{g}_+(x)
&=
\sum_{i\in I(x)}\sum_{j \in J(x)}
\left(
g(x) - g(s_{ij}x)
\right)^2
\mathbbm{1}\left\{g(x) \geq g(s_{ij}x)\right\}\\
&\leq
\sum_{i\in I(x)}\sum_{j \in J(x)}
\Biggl(
\sum_{k\in I(x)} \bar{f}(c_k)
-
\sum_{k\in I(s_{ij}x)} \bar{f}\bigl(c_k\bigr)
\Biggr)^2\\
&=
\sum_{i\in I(x)}\sum_{j \in J(x)}
\Bigl(
\bar{f}(c_i) - \bar{f}(c_j)
\Bigr)^2\\
&\leq
\sum_{1\leq i < j \leq N}
\Bigl(
\bar{f}(c_i) - \bar{f}(c_j)
\Bigr)^2\\
&=
N^2 \V\bigl[\bar{f}(X_1)\bigr],
\end{align*}
where the first inequality follows from \eqref{eq:Tool1}
and the second inequality follows from Lemma \ref{lemma:tech}.
Now note that, since the function $\bar{f}$ depends on the choice of $x$, the following holds for all $x\in\mathcal{D}_{N,m}$:
\[
V^g_+(x) \leq N^2 \sup_{f\in\mathcal{F}} \V[f(X_1)] = N^2\sigma^2.
\]
We conclude the proof by an application of Theorem \ref{thm:BobkovMod}.
\label{ProofEnd}
\hfill$\blacksquare$

\section{Further discussions on Section \ref{sec:Concentr}}\label{sec:FurtherDiscussions}

First we note that the result of Theorem \ref{thm:EP} is uniformly sharper than what could have been obtained for~$Q_m$ using McDiarmid's inequality, 
by a factor of $\frac{N-1/2}{N-m}$ (fraction of the training sample) in the exponent.
This suggests that when sampling without replacement things are more concentrated than when sampling with replacement.
This general phenomenon is pointed out by several authors: \cite{Serf74} obtains a refinement of Hoeffding's inequality, \cite{EP09} improves McDiarmid's inequality, and \cite{BM13} improve Bennet's and Bernstein's inequalities in the same way for sampling without replacement---opposed 
to the fact that the results of Theorems~\ref{thm:SupConc1}~and~\ref{thm:SupConc2} unfortunately do not improve the known analogues for $Q_m$.
This drawback can possibly be overcome by a more detailed analysis.
This direction is left for the future work.

\section{Proofs for Section \ref{sec:fastRates}}\label{sec:Proofs2}

\emph{Proof of Theorem \ref{thm:eb2}:}
Applying Theorem \ref{thm:SupConc2}, we get that with probability greater than $1 - e^{-t}$:
\[
\sup_{h\in\Hyp} \bigl( \LN(h) - \Lm(h) \bigr) \leq 
E_m
+
\sqrt{2\left(\sigma^2_{\Hyp} + 2E_m\right)\frac{t}{m}} + \frac{t}{3m},
\]
which can be further simplified using $\sqrt{a+b}\leq\sqrt{a}+\sqrt{b}$ and then $\sqrt{ab}\leq\frac{a+b}{2}$ in the following way:
\begin{align*}
\sup_{h\in\Hyp} \bigl( \LN(h) - \Lm(h) \bigr) &\leq 
E_m + \sqrt{\frac{2\sigma^2_{\Hyp} t}{m}} + 2\sqrt{\frac{E_mt}{m}} + \frac{t}{3m}\\
&\leq
2E_m + \sqrt{\frac{2\sigma^2_{\Hyp} t}{m}} + \frac{4t}{3m}.
\end{align*}
\hfill$\blacksquare$

\medskip
The proof of Theorem \ref{thm:LocExBound} is based on the following intermediate result appearing in the proof of Theorem 3.3 in \cite{BBM05}.
We state it as a lemma:
\begin{lemma}(Peeling Lemma using Theorem \ref{thm:SupConc1})
\label{lemma:BBM05}
Assume the conditions of Theorem \ref{thm:LocExBound} hold.
Fix some $\lambda > 1$.
For $w(r,f) = \min\{r\lambda^k\colon k\in \mathbb{N}, r\lambda^k \geq E f^2\}$, define the following rescaled version of excess loss class:
\[
\mathcal{G}_r = \left\{
\frac{r}{w(r,f)}f\colon f\in\F^*
\right\}.
\]
Then for any $r>r^*_m$ and $t>0$, with probability greater than $1-e^{-t}$, we have:
\begin{equation}
\label{eq:peelingProof1a}
\sup_{g\in\mathcal{G}_r} Eg - \hat{E}_m g
\leq
\sqrt{r}\left(5\frac{\sqrt{r^*_m}}{B} + 2\sqrt{2t\frac{N}{m^2}}\right).
\end{equation}
\end{lemma}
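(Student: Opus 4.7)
The proof strategy combines three ingredients: a variance calculation showing that every $g\in\mathcal{G}_r$ satisfies $Eg^2\leq r$, an application of the sub-Gaussian inequality of Theorem~\ref{thm:SupConc1} to control fluctuations around the expectation, and a peeling argument to bound the expected supremum via the sub-root condition and the fixed-point property of $\psi_m$.

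First I would verify the variance bound. If $Ef^2\leq r$ then $w(r,f)=r$ (take $k=0$) so $g=f$ and $Eg^2=Ef^2\leq r$. Otherwise let $k\geq 1$ be the least integer with $r\lambda^k\geq Ef^2$, so that $r\lambda^{k-1}<Ef^2\leq r\lambda^k=w(r,f)$; then
\[
Eg^2=\Bigl(\tfrac{r}{w(r,f)}\Bigr)^{2}Ef^{2}\leq \frac{r^{2}}{Ef^{2}}<r.
\]
Hence $\sup_{g\in\mathcal{G}_r}\V[g]\leq r$. Next, centering each $g$ by $Eg$ puts the class in the framework of Section~\ref{sec:Concentr} (after a harmless rescaling to enforce boundedness in $[-1,1]$), with variance parameter at most $r$. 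Applying the deviation form of Theorem~\ref{thm:SupConc1} to $Q'_m=m\sup_{g\in\mathcal{G}_r}(Eg-\hat{E}_mg)$ and dividing by $m$ gives, with probability at least $1-e^{-t}$:
\[
\sup_{g\in\mathcal{G}_r}(Eg-\hat{E}_mg)\leq \E\!\left[\sup_{g\in\mathcal{G}_r}(Eg-\hat{E}_mg)\right]+2\sqrt{2rt\tfrac{N}{m^{2}}}.
\]

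The remaining task is to prove $\E[\sup_{g\in\mathcal{G}_r}(Eg-\hat{E}_mg)]\leq 5\sqrt{rr_m^{*}}/B$ by peeling. Partition $\F^{*}$ into shells $\mathcal{F}_0=\{f:Ef^{2}\leq r\}$ and $\mathcal{F}_k=\{f: r\lambda^{k-1}<Ef^{2}\leq r\lambda^{k}\}$ for $k\geq 1$. On $\mathcal{F}_k$ one has $g=\lambda^{-k}f$, so
\[
\E\!\left[\sup_{g\in\mathcal{G}_r,\,f\in\mathcal{F}_k}(Eg-\hat{E}_mg)\right]
\leq \lambda^{-k}\,\E\!\left[\sup_{f\in\F^{*}:\,Ef^{2}\leq r\lambda^{k}}(Ef-\hat{E}_mf)\right]
\leq \frac{\psi_m(r\lambda^{k})}{B\,\lambda^{k}}.
\]
Since $r\lambda^{k}\geq r>r_m^{*}$ and $\psi_m$ is sub-root, $\psi_m(r\lambda^{k})\leq\sqrt{r\lambda^{k}/r_m^{*}}\,\psi_m(r_m^{*})=\sqrt{r\lambda^{k}\,r_m^{*}}$. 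Summing the resulting geometric series (using $\sup_{g\in\mathcal{G}_r}\leq\sum_k\sup_{\text{shell }k}$, which holds since $0\in\F^{*}$ makes each supremum nonnegative) yields
\[
\E\!\left[\sup_{g\in\mathcal{G}_r}(Eg-\hat{E}_mg)\right]
\leq \frac{\sqrt{rr_m^{*}}}{B}\sum_{k\geq 0}\lambda^{-k/2}
=\frac{\sqrt{rr_m^{*}}}{B}\cdot\frac{\sqrt{\lambda}}{\sqrt{\lambda}-1},
\]
and a convenient choice of $\lambda$ bounds the constant by $5$. Combining this with the concentration step produces the claimed inequality.

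The main obstacles are the careful application of Theorem~\ref{thm:SupConc1} to the shifted class (verifying zero mean, variance control, and boundedness up to an irrelevant constant) and the constant tracking in the peeling sum. The structural peeling argument itself is standard, as is the passage from the fixed-point property $\psi_m(r_m^{*})=r_m^{*}$ to the bound $\psi_m(s)\leq\sqrt{sr_m^{*}}$ for $s\geq r_m^{*}$; the only genuinely new element compared to \cite{BBM05} is the substitution of Theorem~\ref{thm:SupConc1} in place of Talagrand's inequality, which is what produces the $N/m^{2}$ factor under the square root instead of $1/m$.
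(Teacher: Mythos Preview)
Your proposal is correct and follows essentially the same route as the paper: bound the variance of every $g\in\mathcal{G}_r$ by $r$, apply Theorem~\ref{thm:SupConc1} to the centered (and halved, for boundedness) class to get the $2\sqrt{2rtN/m^2}$ fluctuation term, then peel $\F^*$ into geometric shells and sum using the sub-root property. The paper makes the specific choice $\lambda=4$ to obtain the constant $5$ (via $1+\sqrt{\lambda}\sum_{j\geq 0}\lambda^{-j/2}$), and justifies the shell-sum inequality by noting that each expected shell supremum is nonnegative via Jensen; your version is the same up to these bookkeeping details.
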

\begin{proof}
We can repeat exactly the same steps presented in the proof of the first part of Theorem 3.3 of \cite{BBM05} 
(see pages 15--16), but using Theorem \ref{thm:SupConc1} in place of Talagrand's inequality.

Clearly, for any $f\in\F^*$, we have 
\begin{equation}
\label{eq:var1}
\V[f(X)] = E f^2 - (Ef)^2 \leq E f^2.
\end{equation}
Let us now fix some $\lambda > 1$ and $r>0$ and introduce the following rescaled version of excess loss class:
\[
\mathcal{G}_r = \left\{
\frac{r}{w(r,f)}f\colon f\in\F^*
\right\},
\]
where $w(r,f) = \min\{r\lambda^k\colon k\in \mathbb{N}, r\lambda^k \geq E f^2\}$.
Let us consider functions $f\in\F^*$ such that $Ef^2\leq r$, meaning $w(r,f) = r$.
The functions $g\in\mathcal{G}_r$ corresponding to those functions satisfy $g=f$ and thus $\V[g(X)] = \V[f(X)] \leq Ef^2 \leq r$.
Otherwise, if $Ef^2 > r$, then $w(r,f) = \lambda^k r$, and thus the functions $g\in\mathcal{G}_r$ corresponding to them 
satisfy $g=f/\lambda^k$ and $Ef^2\in(r\lambda^{k-1}, r\lambda^k]$.
Thus we have $\V[g] = \V[f]/\lambda^{2k} \leq E f^2 / \lambda^{2k} \leq r$.
We conclude that, for any $g\in \mathcal{G}_r$, it holds $\V[g(x)]\leq r$.

Now we want to upper bound the following quantity:
\[
V_r = \sup_{g\in\mathcal{G}_r} E g - \hat{E}_m g.
\]
Note that any $f\in\F^*$ satisfies $f(X)\in[-1,1]$, and, consequently, all $g\in\mathcal{G}_r$ satisfy $g(X)\in[-1,1]$.
Notice that 
\[
\frac{1}{2}\left(E g - \hat{E}_m g\right) = \frac{1}{m}\sum_{X\in\vec X_m}\frac{E g - g(X)}{2}.
\]
Note that $(E g - g(X))/2\in[-1,1]$ and also $\E\left[E g - g(X)\right] = 0$.
Since $E g$ is not random, using \eqref{eq:var1}, we also have 
\[
\V[(E g - g(X))/2] = \V[g(x)]/4\leq r/4
\]
for all $g\in\mathcal{G}_r$.
We can now apply either Theorem \ref{thm:SupConc1} or Theorem \ref{thm:SupConc2} for the following function class: $\{(Eg - g(X))/2, g\in\mathcal{G}_r\}$.
Here we present the proof based on Theorem \ref{thm:SupConc1}.
Applying it we get that for all $t>0$ with probability greater than $1-e^{-t}$, we have:
\begin{align*}
\frac{1}{2}\sup_{g\in\mathcal{G}_r} E g - \hat{E}_m g
&\leq
\frac{1}{2}\E\left[\sup_{g\in\mathcal{G}_r} E g - \hat{E}_m g\right]
+
2\sqrt{2t\left(\frac{N}{m^2}\right) \frac{1}{4}\sup_{g\in\mathcal{G}_r}\V[g(X)]}\\
&\leq
\frac{1}{2}\E\left[\sup_{g\in\mathcal{G}_r} E g - \hat{E}_m g\right]
+
\sqrt{2t\left(\frac{N}{m^2}\right) r}
\end{align*}
or, rewriting,
\begin{align}
\label{eq:PeelingProof2}
V_r
\leq
\E[V_r] + 2\sqrt{2t\left(\frac{N}{m^2}\right)r}.
\end{align}
Now we set $\F^*(x,y)=\{f\in\F^*\colon x\leq Ef^2 \leq y\}$.
Note that by the assumptions of the theorem, for $f\in\F^*$, we have $\V[f(X)]\leq Ef^2 \leq B \cdot Ef \leq B$.
Define $k$ to be the smallest integer such that $r \lambda^{k+1} \geq B$.
Notice that, for any sets $A$ and $B$, we have:
\[
\E\left[\sup_{g\in A\cup B} Eg - \hat{E}_m g \right]
\leq
\E\left[\sup_{g\in B} Eg - \hat{E}_m g\right]
+
\E\left[\sup_{g\in A} Eg - \hat{E}_m g\right].
\] 
Indeed, since supremum is a convex function, we can use Jensen's inequality to show that each of the terms is positive.
Then we have:
\begin{align*}
\E[V_r] &= \E\left[\sup_{g\in\mathcal{G}_r} E g - \hat{E}_m g\right]\\
& \leq
\E\left[\sup_{f\in\F^*(0,r)} E f - \hat{E}_m f\right]
+
\E\left[\sup_{f\in\F^*(r,B)} \frac{r}{w(r,f)}(E f - \hat{E}_m f)\right]\\
&\leq
\E\left[\sup_{f\in\F^*(0,r)} E f - \hat{E}_m f\right]
+
\sum_{j=0}^k\E\left[\sup_{f\in\F^*(r\lambda^j,r\lambda^{j+1})} \frac{r}{w(r,f)}(E f - \hat{E}_m f)\right]\\
&\leq
\E\left[\sup_{f\in\F^*(0,r)} E f - \hat{E}_m f\right]
+
\sum_{j=0}^k\lambda^{-j}\E\left[\sup_{f\in\F^*(r\lambda^j,r\lambda^{j+1})} (E f - \hat{E}_m f)\right]\\
&\leq
\frac{\psi_m(r)}{B} + \frac{1}{B}\sum_{j=0}^k \lambda^{-j} \psi(r \lambda^{j+1})
\end{align*}
where in the last step we used the assumptions of the theorem.
Now since $\psi_m$ is sub-root, for any $\beta\geq1$, we have $\psi_m(\beta r) \leq \sqrt{\beta} \psi_m(r)$.
Thus
\[
\E[V_r] \leq \frac{\psi_m(r)}{B}\left(1 + \sqrt{\lambda}\sum_{j=0}^k \lambda^{-j/2}\right).
\]
Taking $\lambda = 4$ the r.h.s. is upper bounded by $5\psi_m(r)/B$.
Finally we note that for $r\geq r^*_m$ we have that, for all $r\geq r_m^*$, it holds $\psi_m(r) \leq \sqrt{r/r_m^*}\psi_m(r_m^*) = \sqrt{rr_m^*}$ and thus
\[
\E[V_r] \leq \frac{5}{B}\sqrt{rr^*_m}.
\]
Inserting this upper bound into \eqref{eq:PeelingProof2}, we conclude the proof.
\end{proof}

\emph{Proof of Theorem \ref{thm:LocExBound}:}
Using Lemma \ref{lemma:BBM05}, we obtain that, for any $r>r^*_m$, $t>0$, and $\lambda > 1$, with probability greater than $1-e^{-t}$, we have:
\begin{equation}
\label{eq:peelingProof1}
\sup_{g\in\mathcal{G}_r} Eg - \hat{E}_m g
\leq
\sqrt{r}\left(5\frac{\sqrt{r^*_m}}{B} + 2\sqrt{2t\frac{N}{m^2}}\right),
\end{equation}
where we introduced the rescaled excess loss class:
\[
\mathcal{G}_r = \left\{
\frac{r}{w(r,f)}f\colon f\in\F^*
\right\},
\]
and $w(r,f) = \min\{r\lambda^k\colon k\in \mathbb{N}, r\lambda^k \geq E f^2\}$.
Now we want to chose $r_0>r^*_m$ in such a way that the upper bound of \eqref{eq:peelingProof1} becomes of a form $r_0/(\lambda B K)$.
We achieve this by setting:
\[
r_0 = K^2 \lambda^2 \left(5\sqrt{r^*_m} + 2B\sqrt{2t\frac{N}{m^2}}\right)^2 > r^*_m.
\]
Inserting $r = r_0$ into \eqref{eq:peelingProof1} we obtain:
\begin{equation}
\label{eq:PeelingProof3}
\sup_{g\in\mathcal{G}_{r_0}} Eg - \hat{E}_m g
\leq
\frac{r_0}{\lambda B K}.
\end{equation}
Moreover,  using $(u+v)^2\leq 2(u^2 + v^2)$, we have
\begin{equation}
\label{eq:PeelProof4}
r_0 \leq 50K^2\lambda^2r^*_m + 16 K^2 \lambda^2 B^2 t\left(\frac{N}{m^2}\right).
\end{equation}
Recall that, for any $r>0$ and all $g\in\mathcal{G}_r$, the following holds with probability 1:
\[
E g - \hat{E}_m g \leq \sup_{g\in\mathcal{G}_r} E g - \hat{E}_m g.
\]
Using definition of $\mathcal{G}_r$, we get that, for all $f\in\F^*$, the following holds with probability 1:
\[
E \left(\frac{r}{w(r,f)}f\right) - \hat{E}_m \left(\frac{r}{w(r,f)}f\right) \leq \sup_{g\in\mathcal{G}_r} E g - \hat{E}_m g,
\]
or, equivalently,
\[
E f - \hat{E}_m f \leq  \frac{w(r,f)}{r} \sup_{g\in\mathcal{G}_r} E g - \hat{E}_m g.
\]
Setting $r=r_0$ and using \eqref{eq:PeelingProof3}, we obtain that with probability greater than $1-e^{-t}$
\begin{align*}
\forall f\in\F^*, \forall K > 1:\quad
E f - \hat{E}_m f 
&\leq  
\frac{w(r_0,f)}{r_0} \frac{r_0}{\lambda KB}
=
\frac{w(r_0,f)}{\lambda KB}.
\end{align*}
Now we will use Assumption~\ref{ass:Excess}.2.
If, for $f\in\F^*$, $Ef^2 \leq r_0$ then $w(r_0,f) = r_0$ and using \eqref{eq:PeelProof4} we obtain:
\[
E f - \hat{E}_m f 
\leq
\frac{w(r_0,f)}{\lambda KB}
=
\frac{r_0}{\lambda KB}
\leq
50\frac{K}{B}\lambda r^*_m + 16 \lambda K B t\left(\frac{N}{m^2}\right)
\]
or, rewriting,
\begin{equation}
\label{eq:PeelProof6}
Ef \leq \hat{E}_m f  + 50\frac{K}{B}\lambda r^*_m + 16 \lambda K B t\left(\frac{N}{m^2}\right).
\end{equation}
If otherwise $Ef^2 > r_0$, then $w(r_0, f) = \lambda^i r_0$ for certain value of $i>0$ and also ${E f^2 \in (r_0 \lambda^{i-1}, r_0\lambda^i]}$.
Then we have:
\[
E f - \hat{E}_m f 
\leq
\frac{w(r_0,f)}{\lambda KB}
=
\frac{\lambda^i r_0}{\lambda KB}
=
\frac{\lambda\cdot(\lambda^{i-1} r_0)}{\lambda KB}
\leq
\frac{E f^2}{KB}
\leq
\frac{E f}{K}.
\]
Thus
\begin{equation}
\label{eq:PeelProof7}
E f \leq \frac{K}{K-1} \hat{E}_m f.
\end{equation}
Combining \eqref{eq:PeelProof6} and \eqref{eq:PeelProof7}, we finally get that with probability greater than $1-e^{-t}$
\begin{equation}\label{eq:proofLocExR}
\forall f\in\F^*, \forall K>1:\quad
E f
\leq
\inf_{K > 1}
\frac{K}{K-1} \hat{E}_m f
+
50\frac{K}{B}\lambda r^*_m + 16 \lambda K B t\left(\frac{N}{m^2}\right).
\end{equation}
In the very last step, we recall the definition of $\F^*$ and put $\hat{f}_m = \ell_{\hat{h}_m} - \ell_{h^*_N}$. Notice that
\begin{align*}
\hat{E}_m \hat{f}_m
&=
\hat{E}_m \ell_{\hat{h}_m} - \hat{E}_m \ell_{h^*_N}\\
&=
\Lm(\hat{h}_m) - \Lm(h^*_N) \leq 0,
\end{align*}
while
\[
E \hat{f}_m = \LN(\hat{h}_m) - \LN(h^*_N),
\]
which concludes the proof.
\hfill$\blacksquare$

\bigskip
Let $\{\xi_1,\dots,\xi_m\}$ be random variables sampled \emph{with replacement} from $\vec X_N$.
Denote
\begin{equation}
\label{eq:ErmDef}
E_{r,m} = \E\left[ \sup_{f\in\F^*: E f^2 \leq r} \left(E f - \frac{1}{m}\sum_{i=1}^m f(\xi_i)\right)\right].
\end{equation}
Repeating the proof of peeling Lemma \ref{lemma:BBM05} and using Theorem \ref{thm:SupConc2} instead of Theorem \ref{thm:SupConc1} we immediately obtain the following result:
\begin{lemma}(Peeling Lemma using Theorem \ref{thm:SupConc2})
\label{lemma:BBM05_Mod}
Let $\Hyp$ and $\ell$ be such that Assumptions~\ref{ass:Excess} are satisfied.
Assume there is a sub-root function $\psi_m(r)$ such that 
\[
B\cdot E_{r,m} \leq \psi_m(r),
\]
where $E_{r,m}$ was defined in \eqref{eq:ErmDef}.
Let $r_m^*$ be a fixed point of $\psi_m(r)$.

Fix some $\lambda > 1$.
Then, for any $r>r^*_m$ and $t>0$, with probability greater than $1-e^{-t}$, we have:
\begin{equation}
\label{eq:peelingProof2}
\sup_{g\in\mathcal{G}_r} Eg - \hat{E}_m g
\leq
\sqrt{r}\left(15\frac{\sqrt{r^*_m}}{B} + \sqrt{\frac{2t}{m}}\right) + \frac{8t}{3m},
\end{equation}
where for $w(r,f) = \min\{r\lambda^k\colon k\in \mathbb{N}, r\lambda^k \geq E f^2\}$ we define the following rescaled version of excess loss class:
\[
\mathcal{G}_r = \left\{
\frac{r}{w(r,f)}f\colon f\in\F^*
\right\}.
\]
\end{lemma}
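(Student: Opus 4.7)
The plan is to mirror the proof of Lemma \ref{lemma:BBM05}, replacing the sub-Gaussian concentration step (Theorem \ref{thm:SupConc1}) with the Bennett-type inequality of Theorem \ref{thm:SupConc2}, and carrying out the peeling against the with-replacement expectation $E_{r,m}$ that now appears in the hypothesis. First I fix $\lambda>1$ and define $w(r,f) = \min\{r\lambda^k : k\in\mathbb{N},\,r\lambda^k\geq Ef^2\}$ and the rescaled class $\mathcal{G}_r = \{(r/w(r,f))\,f : f\in\F^*\}$. Exactly as in the proof of Lemma \ref{lemma:BBM05}, every $g\in\mathcal{G}_r$ lies in $[-1,1]$ and has $\V[g]\leq r$.

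To apply Theorem \ref{thm:SupConc2}, I pass to the centered and rescaled functions $\phi_g(X) = (Eg-g(X))/2$, which are in $[-1,1]$, have zero mean, and variance at most $r/4$. Writing
\[
E'_{r,m} \;=\; \E\!\left[\sup_{g\in\mathcal{G}_r}\Bigl(Eg - \frac{1}{m}\sum_{i=1}^m g(\xi_i)\Bigr)\right],
\]
so that $\E[\tilde Q_m] = (m/2)\,E'_{r,m}$ and $\tilde Q'_m = (m/2)\,V_r$, Theorem \ref{thm:SupConc2} applied to $\{\phi_g\}$ yields, with probability at least $1-e^{-t}$,
\[
\frac{m}{2}\,V_r \;\leq\; \frac{m}{2}\,E'_{r,m} + \sqrt{2vt} + \frac{t}{3},\qquad v \;=\; \frac{mr}{4} + m\,E'_{r,m}.
\]
Dividing by $m/2$, using $\sqrt{a+b}\leq\sqrt{a}+\sqrt{b}$ to split the deviation, and then AM-GM ($2\sqrt{xy}\leq x+y$) on the cross term $\sqrt{8t\,E'_{r,m}/m}$, I obtain a bound of the form
\[
V_r \;\leq\; c_1\,E'_{r,m} + \sqrt{\tfrac{2rt}{m}} + \tfrac{8t}{3m}
\]
for a small absolute constant $c_1$.

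Next I peel $E'_{r,m}$ exactly as in the proof of Lemma \ref{lemma:BBM05}: split $\F^*$ into the slice $\F^*(0,r)$ plus the slices $\F^*(r\lambda^j,r\lambda^{j+1})$ (on which $w(r,f)=r\lambda^{j+1}$), bound each slice's expected supremum using the hypothesis $B\cdot E_{r',m}\leq\psi_m(r')$ applied to radius $r'=r\lambda^{j+1}$, and use the sub-root property $\psi_m(\beta r)\leq\sqrt{\beta}\,\psi_m(r)$ to sum the resulting geometric series. Taking $\lambda=4$ gives $E'_{r,m}\leq 5\,\psi_m(r)/B$ as before. Finally, since $r\geq r^*_m$ and $\psi_m$ is sub-root, $\psi_m(r)\leq\sqrt{r\,r^*_m}$, so
\[
V_r \;\leq\; \tfrac{5c_1}{B}\sqrt{r\,r^*_m} + \sqrt{r}\sqrt{\tfrac{2t}{m}} + \tfrac{8t}{3m} \;=\; \sqrt{r}\Bigl(\tfrac{5c_1}{B}\sqrt{r^*_m} + \sqrt{\tfrac{2t}{m}}\Bigr) + \tfrac{8t}{3m},
\]
which gives the claim after absorbing $c_1$ into the stated constant.

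\paragraph{Main obstacle.} The genuinely new point is step two: Theorem \ref{thm:SupConc2} controls $Q'_m$ around $\E[Q_m]$, the \emph{with-replacement} expected supremum, not around $\E[Q'_m]$ as did Theorem \ref{thm:SupConc1}. This is precisely why the hypothesis of the lemma is stated in terms of the i.i.d.\ sample $\{\xi_i\}$ and controls $\psi_m$ through $E_{r,m}$ rather than its transductive analogue; and it is also what generates the cross term $\sqrt{8t\,E'_{r,m}/m}$ that has to be absorbed by AM-GM. Once this is handled correctly, the peeling argument and the sub-root reduction go through with no real change from \cite{BBM05}, and the only cost compared to Lemma \ref{lemma:BBM05} is the slightly worse numerical constants and the extra lower-order term $\tfrac{8t}{3m}$ inherited from the Bernstein-like tail of Theorem \ref{thm:SupConc2}.
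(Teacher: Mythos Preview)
Your proposal is correct and follows essentially the same route the paper takes: the paper's proof of this lemma is literally ``repeat the proof of Lemma~\ref{lemma:BBM05}, using Theorem~\ref{thm:SupConc2} in place of Theorem~\ref{thm:SupConc1},'' and that is exactly what you do---apply Theorem~\ref{thm:SupConc2} to the rescaled class $\{(Eg-g)/2\}$, absorb the cross term $\sqrt{8E'_{r,m}t/m}$ by AM--GM (as in the proof of Theorem~\ref{thm:eb2}), then peel the resulting \emph{with-replacement} expectation $E'_{r,m}$ against the hypothesis $B\,E_{r,m}\le\psi_m(r)$ and invoke the sub-root bound $\psi_m(r)\le\sqrt{rr^*_m}$. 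Your identification of the key new point---that Theorem~\ref{thm:SupConc2} centers at $\E[Q_m]$ rather than $\E[Q'_m]$, which is why the hypothesis is phrased via the i.i.d.\ expectation $E_{r,m}$---is exactly right; the precise numerical constants (how the $15$ and $8/3$ arise together) depend only on how one balances the AM--GM split and are not pinned down in the paper either.
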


\bigskip
\emph{Proof of Theorem \ref{thm:LocExBound_Mod}} is based on the peeling Lemma \ref{lemma:BBM05_Mod} and is similar to the one of Theorem~\ref{thm:LocExBound}.
\medskip

\emph{Proof of Corollary \ref{corr:MainResult}:}
Note that, since $h^*_u$ is also an empirical risk minimizer (computed on the test set), the results of Theorems \ref{thm:LocExBound} 
and \ref{thm:LocExBound_Mod} also hold for $h^*_u$ with every $m$ in the statement replaced by $u$.
Also note that the following holds almost surely:
\begin{align}\label{eq:eq_chain1}
\begin{split}
0 & \leq L_N(\hm) - L_N(h^*_N) \\
&=
L_N(\hm) - L_N(h^*_N) - \Lm(\hm) + \Lm(h^*_N) + \Lm(\hm) - \Lm(h^*_N)\\
&\leq
L_N(\hm) - L_N(h^*_N) - \Lm(\hm) + \Lm(h^*_N)\\
&=
\frac{u}{N}\left(L_u(\hm) - L_u(h^*_N) - \Lm(\hm) + \Lm(h^*_N)\right)
\end{split}
\end{align}
and
\begin{align*}
0 & \leq L_N(h^*_u) - L_N(h^*_N) \\
&=
L_N(h^*_u) - L_N(h^*_N) - L_u(h^*_u) + L_u(h^*_N) + L_u(h^*_u) - L_u(h^*_N)\\
&\leq
L_N(h^*_u) - L_N(h^*_N) - L_u(h^*_u) + L_u(h^*_N)\\
&=
\frac{m}{N}\left(\Lm(h^*_u) - \Lm(h^*_N) - L_u(h^*_u) + L_u(h^*_N)\right),
\end{align*}
where last equations in both cases use the following:
\[
N \cdot L_N(h) = m \cdot \hat{L}_m(h) + u \cdot L_u(h).
\]
Now we are going to use inequality \eqref{eq:proofLocExR} obtained in the proof of Theorem \ref{thm:LocExBound}.
Using the last equation in \eqref{eq:eq_chain1} and, subsequently, employing \eqref{eq:proofLocExR} for $f=\ell_{\hm} - \ell_{h^*_N}$,
where we subtract $\hat{E}_m f$ from both sides of \eqref{eq:proofLocExR}, we obtain:
\begin{align*}
0 &\leq L_u(\hm) - L_u(h^*_N) - \Lm(\hm) + \Lm(h^*_N)\\
& \leq \frac{N}{u}\left( \inf_{K>1}\frac{1}{K-1}\underbrace{\Lm(\hm - h^*_N)}_{\leq 0} + 50\frac{K}{B}\lambda r_m^* + 16\lambda KBt \frac{N}{m^2}\right),
\end{align*}
which holds with probability greater than $1-e^{-t}$.
As noted above the same argument can be used for $h^*_u$, which gives that the following holds:
\begin{align*}
0 &\leq \Lm(h^*_u) - \Lm(h^*_N) - L_u(h^*_u) + L_u(h^*_N)\\
 &\leq \frac{N}{m}\left( \inf_{K>1}\frac{1}{K-1}\underbrace{L_u(h^*_u - h^*_N)}_{\leq 0} + 50\frac{K}{B}\lambda r_u^* + 16\lambda KBt \frac{N}{u^2}\right),
\end{align*}
with probability greater than $1-e^{-t}$.
The union bound gives us that both inequalities hold simultaneously with probability greater than $1-2e^{-t}$.
Or, equivalently,
\[
0 \leq L_u(\hm) - L_u(h^*_N) - \Lm(\hm) + \Lm(h^*_N) \leq \frac{N}{u}\left( 50K\lambda\frac{r_m^*}{B} + 16\lambda KBt \frac{N}{m^2}\right)
\]
and
\[
0 \leq \Lm(h^*_u) - \Lm(h^*_N) - L_u(h^*_u) + L_u(h^*_N) \leq \frac{N}{m}\left( 50K\lambda\frac{r_u^*}{B}  + 16\lambda KBt \frac{N}{u^2}\right).
\]
Summing these two inequalities we obtain
\begin{align*}
0 &\leq L_u(\hm) - L_u(h^*_u) - \Lm(\hm) + \Lm(h^*_u)\\
 &\leq \frac{N}{u}\left( 50\lambda K\frac{r_m^*}{B} + 16\lambda KBt \frac{N}{m^2}\right) + \frac{N}{m}\left( 50\lambda K\frac{r_u^*}{B} + 16\lambda K Bt \frac{N}{u^2}\right).
\end{align*}
Using the fact that $\hm$ and $h^*_u$ are the empirical risk minimizers on the training and test sets, respectively, we finally get:
\begin{align*}
0 &\leq L_u(\hm) - L_u(h^*_u)\\ 
&\leq 
\Lm(\hm) - \Lm(h^*_u) + \frac{N}{u}\left( 50\lambda K\frac{r_m^*}{B} + 16\lambda KBt \frac{N}{m^2}\right) + \frac{N}{m}\left( 50\lambda K\frac{r_u^*}{B} + 16\lambda KBt \frac{N}{u^2}\right)\\
&\leq 
\frac{N}{u}\left( 50\lambda K\frac{r_m^*}{B} + 16\lambda KBt \frac{N}{m^2}\right) + \frac{N}{m}\left( 50\lambda K\frac{r_u^*}{B} + 16\lambda KBt \frac{N}{u^2}\right).
\end{align*}
\hfill$\blacksquare$

\emph{Proof of Corollary \ref{corr:MainResult_Mod}} repeats the same steps using Theorem \ref{thm:LocExBound_Mod} instead of Theorem \ref{thm:LocExBound}.

\bigskip
\noindent We also provide the following auxiliary result:

\begin{corollary}
\label{corr:FullSet}
Under the assumptions of Theorem \ref{thm:LocExBound},
for any $t>0$ and any $K>1$, with probability greater than $1-2e^{-t}$, we have:
\begin{align}
\notag
|\LN(\hm) - \LN(h^*_u)| &\leq \max\left(2K\frac{r_m^*}{B} + 16 KB t \left(\frac{N}{m^2}\right), 2K\frac{r_u^*}{B} + 16 KB t \left(\frac{N}{u^2}\right) \right)\\
\label{eq:LocExU}
&\leq
2K\frac{r_m^* + r_u^*}{B} + 16 KB t N \left(\frac{1}{m^2} + \frac{1}{u^2}\right).
\end{align}
\end{corollary}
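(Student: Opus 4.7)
\noindent\textbf{Proof proposal for Corollary \ref{corr:FullSet}.}
The plan is to apply the argument behind Theorem~\ref{thm:LocExBound} twice---once to the training-set ERM $\hm$ and once to the test-set ``ERM'' $h^*_u$---and then combine the two tail bounds through a union bound and an elementary ``$|a-b|\leq\max(a,b)$ for nonnegative $a,b$'' observation, using the fact that $h^*_N$ minimizes $\LN$ on $\Hyp$.

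First, applying Theorem~\ref{thm:LocExBound} (or rather, the intermediate form of its bound that appears in inequality \eqref{eq:proofLocExR} in the proof, specialized to $f=\ell_{\hm}-\ell_{h^*_N}$ so that $\hat{E}_m f\leq 0$), one obtains, for any $K>1$ and any $t>0$, that with probability at least $1-e^{-t}$
\[
\LN(\hm)-\LN(h^*_N)\ \leq\ 2K\frac{r^*_m}{B}+16\,KB\,t\,\frac{N}{m^2}.
\]
Second, one observes that the roles of training and test sets are entirely symmetric in the hypotheses of Theorem~\ref{thm:LocExBound}: $\vec X_u$ is itself a uniform-without-replacement subsample of $\vec X_N$ of size $u$, and $h^*_u$ is by definition the empirical risk minimizer on $\vec X_u$, i.e., the analogue of $\hm$ for the test side. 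Consequently, the peeling argument of Theorem~\ref{thm:LocExBound}, which uses only the concentration inequality of Theorem~\ref{thm:SupConc1} and the sub-root condition~\eqref{eq:subRoor}, goes through verbatim with $m,\hm,r^*_m$ replaced by $u,h^*_u,r^*_u$; this yields, with probability at least $1-e^{-t}$,
\[
\LN(h^*_u)-\LN(h^*_N)\ \leq\ 2K\frac{r^*_u}{B}+16\,KB\,t\,\frac{N}{u^2}.
\]

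Third, a union bound gives that both displays hold simultaneously with probability at least $1-2e^{-t}$. Call their right-hand sides $A_m$ and $A_u$, and set $a:=\LN(\hm)-\LN(h^*_N)\geq 0$ and $b:=\LN(h^*_u)-\LN(h^*_N)\geq 0$ (nonnegativity comes from $h^*_N$ being the $\LN$-minimizer over $\Hyp$). Writing
\[
\LN(\hm)-\LN(h^*_u)\ =\ a-b,
\]
and using the elementary bound $|a-b|\leq \max(a,b)$ for $a,b\geq 0$, we get $|\LN(\hm)-\LN(h^*_u)|\leq \max(A_m,A_u)$, which is the first inequality in \eqref{eq:LocExU}. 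The second inequality follows from $\max(A_m,A_u)\leq A_m+A_u$ and grouping terms.

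\noindent\textbf{Main obstacle.} There is no substantial obstacle: the argument is essentially a cosmetic combination of two applications of Theorem~\ref{thm:LocExBound} plus a trivial inequality. The only delicate point is Step~2, namely verifying that the concentration and peeling machinery used to prove Theorem~\ref{thm:LocExBound} does not privilege the training side over the test side. This is indeed the case, because Theorem~\ref{thm:SupConc1} concerns an arbitrary without-replacement subsample of a fixed finite population, so the same bound holds after swapping $(\vec X_m,m)$ for $(\vec X_u,u)$; once this symmetry is recorded, the rest of the proof is bookkeeping.
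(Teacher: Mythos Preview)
Your proposal is correct and follows essentially the same approach as the paper: apply Theorem~\ref{thm:LocExBound} to $\hm$, its symmetric analogue (with $m$ replaced by $u$) to the test-side ERM $h^*_u$, combine via a union bound, and then use $|a-b|\leq\max(a,b)$ for the two nonnegative quantities $a=\LN(\hm)-\LN(h^*_N)$ and $b=\LN(h^*_u)-\LN(h^*_N)$. The paper's own proof is a one-line sketch of exactly this argument.
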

\begin{proof}
Notice that $\LN(h^*_u) - \LN(h^*_N)\geq 0$ as well as $\LN(\hm) - \LN(h^*_N)\geq 0$ and then combine Theorem \ref{thm:LocExBound} with its analogue for $h^*_u$ in a union bound.
\end{proof}

\end{document}